\theoremstyle{plain}
\newtheorem{theorem}{Theorem}[section]
\newtheorem{proposition}[theorem]{Proposition}
\newtheorem{lemma}[theorem]{Lemma}
\theoremstyle{definition}
\newtheorem{assumption}[theorem]{Assumption}
\theoremstyle{remark}
\newtheorem{remark}[theorem]{Remark}
\newcommand{\R}{\mathbb{R}}
\renewcommand{\P}{\mathbb{P}}
\newcommand{\bX}{\mathbf{X}}
\newcommand{\by}{\mathbf{y}}
\newcommand{\bz}{\mathbf{z}}
\newcommand{\data}{\mathcal{D}}
\newcommand{\model}{\vec\theta}
\newcommand{\nconcepts}{K}
\newcommand{\nsamples}{n}
\newcommand{\concept}{c}
\newcommand{\concepts}{\vec{c}}
\newcommand{\red}[1]{\textcolor{red}{[#1]}}
\newcommand{\method}{BC-LLM}
\newcommand{\err}[1]{\scriptsize (#1)} 
\setlist{left=0pt, itemsep=0pt, topsep=0pt, parsep=0pt}
\title{Bayesian Concept Bottleneck Models with LLM Priors}
\author{
    Jean Feng, Avni Kothari, Luke Zier \\
    University of California,\\San Francisco \\
    \And
    Chandan Singh \\
    Microsoft Research \\
    \And
    Yan Shuo Tan \\
    National University\\
    of Singapore
}
\begin{document}

\maketitle

\begin{abstract}
Concept Bottleneck Models (CBMs) have been proposed as a compromise between white-box and black-box models, aiming to achieve interpretability without sacrificing accuracy.
The standard training procedure for CBMs is to predefine a candidate set of human-interpretable concepts, extract their values from the training data, and identify a sparse subset as inputs to a transparent prediction model.
However, such approaches are often hampered by the tradeoff between exploring a sufficiently large set of concepts versus controlling the cost of obtaining concept extractions, resulting in a large interpretability-accuracy tradeoff.
This work investigates a novel approach that sidesteps these challenges: BC-LLM iteratively searches over a potentially infinite set of concepts within a Bayesian framework, in which Large Language Models (LLMs) serve as both a concept extraction mechanism and prior.
Even though LLMs can be miscalibrated and hallucinate, we prove that BC-LLM can provide rigorous statistical inference and uncertainty quantification.
Across image, text, and tabular datasets, BC-LLM outperforms interpretable baselines and even black-box models in certain settings, converges more rapidly towards relevant concepts, and is more robust to out-of-distribution samples.
\footnote{Code for running \method{} and reproducing results in the paper are available at \url{https://github.com/jjfeng/bc-llm}.}
\end{abstract}

\section{Introduction}
Although machine learning (ML) algorithms have demonstrated remarkable predictive performance, many lack the interpretability and transparency necessary for human experts to audit their accuracy, fairness, and safety~\citep{dwork2012fairness,rudin2018please}.
This has limited their adoption in high-stakes applications such as medicine~\citep{thirunavukarasu2023large} and settings where regulatory agencies require algorithms to be explainable~\citep{mesko2023imperative}.
Recent works have explored Concept Bottleneck Models (CBMs)~\citep{koh2020concept,Kim2023-vu} as a potential solution: these methods leverage black-box algorithms to extract a small number of interpretable \textit{concepts}, which are subsequently processed by a fully transparent tabular model to predict a target label.
Thus, in principle, CBMs are much safer for use in high-stakes applications because humans can audit and, if necessary, modify the extracted concepts to fix predictions.

Nevertheless, CBMs have yet to fully realize this promise.
Early CBM methods were difficult to scale because they relied on human experts to specify and annotate concepts on training data.
Recent works have proposed using LLMs to suggest and even annotate concepts \citep{menon2022visual,Oikarinen2023-ee, McInerney2023-ab, Benara2024-lw}, as LLMs are cheaper, often have sufficient world knowledge to hypothesize useful concepts, and can provide (relatively) high-quality concept annotations \citep{McInerney2023-ab, Benara2024-lw}. This introduces new problems such as LLM hallucinations and inconsistencies but, more critically, does not resolve the fundamental problem that standard CBMs sacrifice a significant amount of accuracy to gain interpretability \citep{yuksekgonul2022post}.
Recent proposals try to address this by encoding concepts using ``soft'' continuous values, such as through embedding similarity \citep{Oikarinen2023-ee}, rather than ``hard'' binary values.
However, ``soft'' CBMs can leak information about the label that would otherwise not have been available \citep{mahinpei2021promises,margeloiu2021concept,havasi2022addressing,ragkousis2024tree,makonnen2025measuring}; this risk is particularly severe when concept extraction  is jointly trained with the final label predictor~\citep{ragkousis2024tree}.\footnote{For instance, a human would fill in 0 when asked ``does this bird have a gray breast'' if there is no gray breast, but a ``soft'' CBMs may extract 0.2 if it sees a blue breast or gray head. This provides additional hints for downstream prediction.}
Because the definitions of soft concepts are ambiguous, humans can no longer easily interpret, audit, or intervene on such CBMs.

The aim of this work is to minimize the accuracy gap without sacrificing the interpretability of the extracted concepts.
To this end, we revisit a known pain point of CBMs: Current training procedures require a pool of candidate concepts to be identified \emph{a priori}, but ensuring this pool contains all relevant concepts is difficult, particularly in domains that are yet to be scientifically well-understood and where there are an \textit{infinite} number of potentially relevant concepts \citep{ludan2023interpretable,Benara2024-lw}.
For instance, for the task of predicting hospital readmission, there are an infinite number of factors affecting a patient's risk.
Even a single concept like ``smoking status'' has an infinite number of refinements, such as the correlated concept of ``whether a patient has quit smoking,''  the broader concept of ``substance use,'' and more.
As shown in prior work, when the candidate pool misses important concepts, the resulting CBMs may be inaccurate and even mislead users as to which concepts are truly relevant \citep{Ramaswamy2023-qn}.

Rather than specifying concepts upfront, we investigate \textit{iterative} refinement of concepts in a CBM with the assistance of an LLM.
This updates a classical modeling paradigm for the LLM era:
prediction models were traditionally designed through the collaboration between domain experts and data scientists, in which they iteratively refine/engineer features for simple tabular models.
LLMs can help significantly accelerate this co-design process and scale up the number of iterations, but they are also imperfect query engines that can hallucinate, suggest bad candidate concepts, incorrectly annotate concepts, and may not even be self-consistent in their prior beliefs \citep{Wang2023-ma}.
To set up guardrails, we ``wrap'' the LLM within a Bayesian posterior inference procedure to explore concepts with the help of an LLM \textit{in a statistically principled manner}.
This approach, which we refer to as \textbf{B}ayesian \textbf{C}oncept bottleneck models with \textbf{LLM} priors (BC-LLM),  offers the following benefits:
\begin{itemize}
    \item \ul{BC-LLM finds relevant concepts, even in settings where there is little to no prior knowledge}:
Unlike prior works that soley rely on a prespecified set of candidate concepts, BC-LLM explores a potentially infinite set of concepts data-adaptively and converge towards the true ones.
    \item \ul{BC-LLM substantially improves the interpretability-accuracy tradeoff in real-world datasets}: Experiments across multiple datasets and modalities (text, images, and tabular data) show that \method{} selects human-interpretable concepts while outperforming comparator methods, often even black-box models. Moreover, when assisting a hospital's data science team to revise an existing tabular ML model, clinicians found \method{} to be substantially more interpretable and actionable.
    \item \ul{BC-LLM corrects LLM mistakes in a statistically rigorous manner to provide calibrated uncertainty quantification}: Mistakes in the LLM's prior  are fully allowed and even expected, as they are corrected in the Metropolis accept/reject step. Moreover, BC-LLM provides uncertainty quantification of the selected concepts, which improves performance in both in-distribution and out-of-distribution (OOD) settings. We prove that \method{} converges to the correct concepts, asymptotically, even when the LLM prior is poor or fails to be self-consistent. The Bayesian framework also naturally protects against overfitting and overconfidence in selected concepts.
    \item \ul{BC-LLM is cost-effective}: BC-LLM selectively annotates $O(KT)$ concepts, where $K$ is the number of concepts in the CBM and $T$ is the number of outer loops; we often use only $K \approx 5$ and $T \approx 5$. In contrast, existing methods annotate \textit{hundreds or thousands} of prespecified concepts \citep{Oikarinen2023-ee}.
\end{itemize}

\begin{figure*}
    \centering
\includegraphics[width=\linewidth]{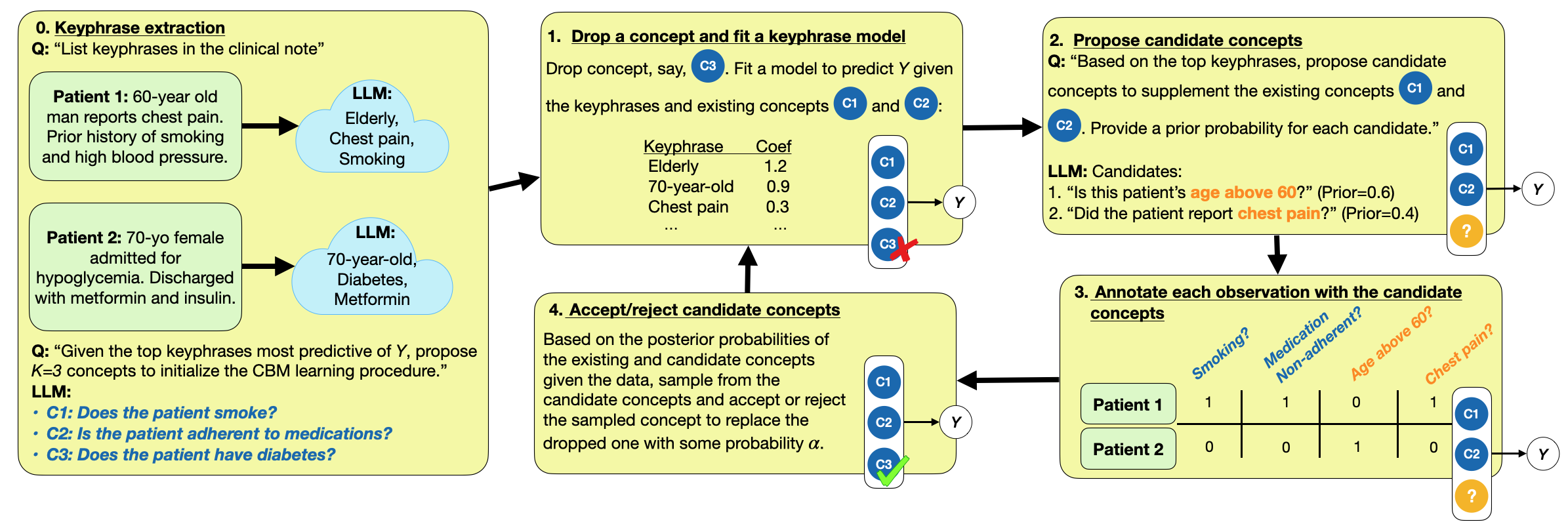}
    \vspace{-0.7cm}
    \caption{
    \method{} is initialized by having the LLM hypothesize the top concepts based on keyphrases extracted from each observation (Step 0). The concepts are then iteratively refined by dropping a concept (Step 1), querying the LLM for candidate replacements (Step 2), annotating each observation with the candidate concepts using the LLM (Step 3), and determining which, if any, of the candidate concepts to accept (Step 4).
    }
\label{fig:intro}
\end{figure*}

\section{Related work}
\label{sec:related_work}
Prior CBM methods train on observations with hand-annotated concepts \citep{koh2020concept, Kim2023-vu, Yuksekgonul2023-um} and, more recently, concepts annotated by an LLM/VLM \citep{Patel2023-dp,McInerney2023-ab, Benara2024-lw}.
These methods are known to be highly sensitive to the selected set of concepts \citep{Ramaswamy2023-qn}.
More recent works have suggested using LLMs to prespecify the list of candidate concepts instead \citep{menon2022visual, Oikarinen2023-ee, Yang2023-oh,panousis2023sparse,singh2025evaluating,xu2025graph,lam2024concept}, but this requires the LLM to have accurate prior knowledge about the supervised learning task.
We note that many of these works also make use of ``soft concepts'' and therefore suffer from information leakage \citep{mahinpei2021promises,margeloiu2021concept,havasi2022addressing,ragkousis2024tree,makonnen2025measuring}.

To find relevant concepts in a more data-driven manner, recent works have proposed taking an iterative approach \citep{ludan2023interpretable, Liu2024-kq,sun2024general,kim2024constructing}.
These methods employ a boosting-like approach, where the LLM is provided a set of misclassified examples and asked to add/revise concepts to improve performance.
However, the LLM often struggles to identify relevant patterns from the presented examples, so the resulting concept proposals are far from optimal.

An alternative strategy is to conduct post-hoc distillation of a black-box model into interpretable concepts \citep{Kim2018-ri, yeh2020completeness, Zhao2024-tj, Jiang2023-tx,singh2023augmenting}, which can be used to learn an interpretable prediction model.
Nevertheless, prior work has shown that the generated explanations are often not faithful to the black-box model, leading to poorer performance from the interpretable model \citep{Doshi-Velez2017-uj}.

Another related line of work uses LLMs to describe conceptual differences between pairs of datasets ~\citep{Zhong2023-rc,zhong2022describing, dunlap2024describing} or to directly describe a black-box function using natural-language concepts~\citep{kopf2025capturing,singh2023explaining,bills2023language,hoang2024llm}.
However, existing methods are primarily suited for simple classification tasks, as they essentially learn a CBM with a single concept rather than a set of concepts.

Finally, recent works have considered combining LLMs with Bayesian techniques \citep{yang2024bayesian, liu2024checkpoint, liu2024large}.
Some have suggested that In-Context Learning (ICL) can be viewed as LLMs conducting Bayesian inference \citep{Xie2021-gi,ye2024pre}, though recent work has proven that ICL is not fully consistent with Bayesian inference \citep{falck2024context}.
In contrast, this work studies how we can iteratively discover new concepts to include using LLMs using a Bayesian inference procedure, accounting for imperfections in LLM reasoning.

\section{Method: \method} \label{sec:method}

Consider a dataset $\data = \lbrace (x_1,y_1),\ldots, (x_n,y_n) \rbrace$, where $x_i$ are model inputs and $y_i$ are labels.
Let $\bX \coloneq (x_1,x_2,\ldots,x_n)$ and $\by \coloneq (y_1,\ldots,y_n)$.
A \emph{concept} $c$ defines a binary- or real-valued function $\phi_c$ of a model input, e.g. ``Does the doctor note describe an elderly patient?'' where $1$=yes and $0$=no.
A CBM selects a fixed number of $\nconcepts$ concepts $\concepts = (c_1,\ldots,c_\nconcepts)$ and fits a parametric model for $y$ using the extractions $(\phi_{c_1}(x),\ldots,\phi_{c_\nconcepts}(x))$.
For simplicity of presentation, our discussion will focus on logistic regression (LR)-based CBMs, although other choices are possible.
In this case, the data is modeled by
$
    p\left(Y = 1 | X=x, \model, \concepts \right) = 
    \sigma\left(
    \sum_{k=1}^{\nconcepts} \theta_k \phi_{c_k}(x)
    + \theta_0
    \right)
$
where $\sigma$ is the sigmoid function and $\model =(\theta_0,\cdots, \theta_\nconcepts)$ are the parameters.
We refer to $\concepts$ as the ``support'' of the model.
Throughout, we use capital and lowercase letters to differentiate between random variables and realized values, respectively. For instance, $C$ refers to a random concept whereas $\concept$ refers to a particular one.

\subsection{Review: Bayesian variable selection}
\label{sec:review}

Learning CBMs shares many similarities with the more standard problem of learning sparse models in high-dimensional settings, where one must select among a finite number of features that have already been tabulated \citep{ishwaran2008random,Castillo2015-pl,biswas2022scalable}.
Here, we review the Bayesian solution for this more standard setting and then discuss how it may be extended to the CBM setting.

Given priors on $\concepts$ and $\model$, the goal of Bayesian inference for sparse models in high-dimensional settings is to sample from the posterior $p(\model,\concepts|\bX,\by)$, which allows for uncertainty quantification of the true support and coefficients.
Posterior samples also describe the uncertainty at a new point $x_0$ via the posterior predictive distribution 
$
    p(y_0|x_0,\by,\bX, \concepts) = \iint p(y_0|x_0, \model, \concepts)p(\model,\concepts|\by,\bX) d\model d\concepts,
$
which can be viewed as combining the posterior samples into an ensemble model.
Factorizing the posterior per $p(\model,\concepts|\by,\bX) = p(\model|\concepts,\by,\bX)p(\concepts|\by,\bX)$, the first term describes the posterior for the model parameters and the second describes the posterior over selected variables.
As the former can be readily obtained by classical Bayesian inference for low-dimensional settings (e.g., posterior inference for LR coefficients), our discussion will focus on the latter.

Inference for $p(\concepts|\by,\bX)$ is typically achieved through Gibbs sampling.
For each outer loop, Gibbs rotates through indices $k=1,\ldots,\nconcepts$, during which it replaces the $k$th concept in the current iterate $\concepts$ by drawing from the posterior distribution for $C_k$ conditional on the other concepts $\concepts_{-k}$.
When one is unable to sample from the conditional posterior, Metropolis-within-Gibbs can be used instead \citep{Gilks1995-fw}.
Rather than immediately replacing the $k$th concept, it proposes a candidate concept $\check \concept_k$ given the other concepts per some distribution $Q(C_k ; \vec{C}_{-k} =\concepts_{-k})$ and accepts the candidate concept with probability $\alpha$.
This acceptance probability $\alpha$, delineated in Algorithm \ref{alg:MH_gibbs}, depends on the relative posterior probabilities of the candidate and existing concepts and is carefully designed so that the posterior distribution is stationary with respect to the sampling procedure.

\begin{wrapfigure}{r}{0.45\textwidth}
\begin{minipage}{0.45\textwidth}
\begin{algorithm}[H]
  \caption{Metropolis-within-Gibbs}
  \label{alg:MH_gibbs}
  \small
\begin{algorithmic}[1]
\State Initialize concept set $\concepts = (\concept_1, \cdots, \concept_\nconcepts)$
    \State List of concept sets $L = []$
  \For{$t=1,2,\ldots,T$}
\For{$k=1,2,\ldots,\nconcepts$}
        \State $\concept_k \gets$ \textsc{MH-Update}($\concepts$, $k$) $\quad$ // Update $k$-th concept
        \State Append the current $\concepts$ to $L$
    \EndFor
\EndFor
  \State \textbf{return} $L$
\Function{MH-Update}{$\concepts$, $k$}
    \quad\State Propose concept $\check\concept_k \sim Q(C_k ; \concepts_{-k})$
    \State $\alpha \gets \min\left\lbrace \frac{p((\concepts_{-k},\check\concept_k)|\by,\bX)Q(\concept_k;\concepts_{-k})}{p(\concepts|\by,\bX)Q(\check\concept_k;\concepts_{-k})}, 1\right\rbrace$
    \If {Accept with probability $\alpha$}
    \Return $\check\concept_k$
    \Else{}
    \Return $\concept_k$
    \EndIf
    \EndFunction
\end{algorithmic}
\end{algorithm}
\end{minipage}
\end{wrapfigure}

We extend Bayesian variable selection to learn CBMs, where the key difference is that the number of potential concepts is effectively infinite, which makes it difficult to formulate an appropriate prior over concepts $p(\vec{C})$ and proposal transition kernel $Q(C_k;\vec{C}_{-k})$.
Intuitively, it seems that LLMs can help perform both these tasks, but how to do so correctly and efficiently is surprisingly subtle.
To motivate our choice, we first describe two na{\"i}ve options:

\textit{Update 1: Propose with the LLM's ``prior.''}
Given the working concepts $\vec{C}_{-k} =\concepts_{-k}$, we can prompt the LLM to propose new concepts, without revealing any data, 
e.g. ``Propose an additional concept for predicting readmissions, given the already-existing concepts $\concepts_{-k}$''.
\footnote{If one believes the LLM's prior knowledge about the outcome of interest may be misleading, we can mask this information, e.g. ``Propose an additional concept for predicting some label $Y$, given the already-existing concepts $\concepts_{-k}$.''}
This can be viewed as treating the proposal distribution $Q(C_k;\vec{C}_{-k}=\concepts_{-k})$ as sampling from the LLM's conditional prior distribution $p(C_k|\concepts_{-k})$.
Consequently, the MH acceptance ratio simplifies to $\frac{p(\by|(\concepts_{-k},\check\concept_k),\bX)}{p(\by|\concepts,\bX)}$, which allows us to override proposed concepts that lead to poor model performance.
Without access to data however, the LLM is likely to propose many irrelevant concepts from its prior, resulting in low acceptance probabilities and slow sampling.
So while Update 1 may provide valid inference, it is inefficient and costly.

\textit{Update 2: Propose with the LLM's ``posterior.''}
Leveraging LLMs' ability to perform ICL \citep{Xie2021-gi,ye2024pre}, another idea is to include all the data when prompting the LLM to propose new concepts.
That is, we ask the LLM itself to ``conduct posterior inference'' and
interpret the LLM's proposal distribution $Q(C_k;\vec{C}_{-k}=\concepts_{-k})$ as sampling from a conditional posterior distribution $p(C_k|\vec c_{-k},\by,\bX)$ for some implicitly defined concept prior.
This approach is risky; while LLMs may be able to approximate posterior inference for small datasets, this approximation breaks down for larger sample sizes \citep{falck2024context}.
Furthermore, we can no longer override poor concepts with data as the MH acceptance ratio simplifies to 1, i.e. the proposed concept is \textit{always} accepted.
So while Update 2 may generate concepts more efficiently, the sampling procedure may not converge to a good or even valid posterior distribution.

These update options are at the end of two extremes, trading off between efficient sampling versus unbiased posterior inference.
Our task then is to find a better balance.

\subsection{The Split-sample update: Propose with the LLM's \textit{partial} posterior} \label{subsec:partial}
Rather than relying on the LLM's prior or posterior, a key idea in BC-LLM is to take the middle road by sampling from an LLM's \textit{partial posterior}, which we will show enjoys the best of both worlds.
The proposal, which we refer to as \textit{Split-sample Metropolis update} (Algorithm \ref{alg:MH_v2}), uses the LLM to propose candidate concepts based on \textit{only a portion of the data} and fixes mistakes in the LLM's proposal using the remaining data in the accept-reject step.
This addresses the drawbacks of the two na\"ive updating procedures in Section~\ref{sec:review}, as it simultaneously uses the data to generate more efficient proposals while overriding the LLM as needed.

More specifically, the split-sample update first randomly splits the data to construct subset $S$ of size $\lfloor \omega n\rfloor$ and its complement $S^c$, for some fraction $\omega$ (e.g. half).
The LLM is prompted to generate candidate concepts by combining its prior knowledge with information from the data in $S$.
We can then interpret the LLM's proposal distribution $Q(C_k;\vec{C}_{-k}=\concepts_{-k})$ as the (conditional) partial posterior distribution $p(C_k|\vec{c}_{-k},\by_S,\bX)$.
Then in the accept-reject step, the MH acceptance probability simplifies to the \emph{partial} Bayes factor $\frac{p(\by_{S^c}|\by_S,(\concepts_{-k},\check\concept),\bX)}{p(\by_{S^c}|\by_S,\concepts,\bX)}$ \citep{o1995fractional}.
This compares the likelihood of the held-out data $S^c$ with respect to the candidate and existing concepts---much like sample-splitting in frequentist settings---and thus provides an opportunity to correct inconsistencies between the LLM's proposal and the actual posterior distribution.

\begin{minipage}[t]{0.36\textwidth}
\begin{algorithm}[H]
  \caption{Split-sample Metropolis update}
  \label{alg:MH_v2}
  \small
\begin{algorithmic}[1]
  \Function{SS-MH-Update}{$\concepts$, $k$}
  \State Sample subset $S$ of size $\lfloor \omega n \rfloor$.
    \State Propose candidate $\check\concept \sim Q(C_k;\concepts_{-k},\by_S,\bX)$
    \State $\alpha \gets \min\left\lbrace \frac{p(\by_{S^c}|\by_S,(\concepts_{-k},\check\concept),\bX)}{p(\by_{S^c}|\by_S,\concepts,\bX)}, 1\right\rbrace$
    \If {Accept with probability $\alpha$}
    \Return $\check\concept$
    \Else{}
    \Return $\concept_k$
    \EndIf
    \EndFunction
\end{algorithmic}
\end{algorithm}
\vfill
\end{minipage}
\hfill
\begin{minipage}[t]{0.6\linewidth}
    \begin{algorithm}[H]
  \caption{Multiple-try split-sample Metropolis update}
  \label{alg:MH_multi}
  \small
\begin{algorithmic}[1]
  \Function{Multi-SS-MH-Update}{$\concepts$, $k$}
    \State Sample subset $S$ of size $\lfloor \omega n \rfloor$.
    \State Propose $\check\concept^{(1)}_k,..., \check \concept^{(M)}_k \sim Q(C_k;\vec c_{-k},\by_S,\bX)$
\State Sample $\check m \in \lbrace 1,\ldots,M\rbrace$ with probabilities $\propto w_m \gets p(\by_{S^c}|\by_S,(\concepts_{-k},\check\concept^{(m)}),\bX)Q(\check\concept^{(m)};\concepts_{-k},\by_S,\bX)$
    \State $w_0 \gets p(\by_{S^c}|\by_S,\concepts,\bX)Q(\concept_k;\concepts_{-k},\by_S,\bX)$
\State $\alpha \gets \min\left\lbrace \frac{Q(\concept_k;\concepts_{-k},\by_S\bX)\sum_{m=1}^M w_m}{Q(\check\concept_k^{(\check{m})};\concepts_{-k},\by_S\bX)\sum_{0 \leq m \leq M, m\neq\check m} w_m}, 1\right\rbrace$ 
\If{Accept with probability $\alpha$} 
    \Return $\check\concept_{k}^{(\check{m})}$
    \Else{}
    \Return $\concept_k$
    \EndIf
    \EndFunction
\end{algorithmic}
\end{algorithm}
\end{minipage}

In practice, rather than sampling a single candidate concept, we use a version of multiple-try Metropolis-Hastings (Algorithm~\ref{alg:MH_multi}) \citep{liu2000multiple}.
Here, the LLM proposes a \textit{batch} of candidates.
We sample a candidate from this batch based on their posterior probabilities and consider the sampled candidate for acceptance.
This is more cost-effective as it lets us batch concept annotations, as described later.

Assuming that the LLM's partial posterior across all iterations is consistent with some prior distribution $p(\vec{C})$, we can prove that the Markov chain defined by this procedure has the posterior $p(\concepts|\by,\bX)$ as its stationary distribution (see Proposition \ref{prop:MCMC} in the Appendix).
Since LLMs are known to be imperfect Bayesian inference engines \citep{falck2024context}, this assumption is at best only approximately satisfied.
Nevertheless, we can prove that the procedure still converges to the true concepts \textit{even when this assumption does not hold}:
\begin{theorem}
    \label{thm:no_prior}
    Suppose the data is IID.
    Let $L(\concepts) \coloneqq \max_{\vec{\theta}}\mathbb{E}\lbrace \log p(Y|X,\vec{\theta},\concepts)\rbrace$ and $\mathcal{C}^* \coloneqq \operatorname{argmax}_{\concepts}L(\concepts).$
    For sample size $\nsamples$, let $\Pi_n$ denote the set of stationary distributions of the Markov chain defined by running Algorithm~\ref{alg:MH_gibbs} with \textsc{SS-MH-Update} or \textsc{Multi-SS-MH-Update} instead of \textsc{MH-Update}.
    Assuming a Gaussian prior for $\vec{\theta}$ and under regularity conditions (see Assumption \ref{assumption_supplement}), then $\inf\lbrace \pi(\mathcal{C}^*) \colon \pi \in \Pi_n\rbrace \to 1$ in probability as $\nsamples \to \infty$.
\end{theorem}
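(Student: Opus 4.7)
The plan is to combine an asymptotic Laplace-type expansion of the partial marginal likelihood with a detailed-balance argument, showing that any stationary distribution of the chain concentrates on $\mathcal{C}^*$ as $n \to \infty$.

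The first step is to characterize the MH acceptance ratio asymptotically. Because $\vec\theta$ has a Gaussian prior and, under Assumption \ref{assumption_supplement}, the conditional log-likelihood is smooth and strictly concave in $\vec\theta$ with a well-separated maximizer for each support, the Laplace approximation yields
\[
    \log p(\by_{S^c} \mid \by_S, \concepts, \bX) = \sum_{i \in S^c} \log p\bigl(y_i \mid x_i, \hat{\vec\theta}_S(\concepts), \concepts\bigr) + O_p(\log n),
\]
where $\hat{\vec\theta}_S(\concepts)$ is the MLE fit on $S$ under support $\concepts$. Combining this with MLE consistency and the law of large numbers on the held-out sample gives
\[
    \log \frac{p(\by_{S^c} \mid \by_S, \concepts', \bX)}{p(\by_{S^c} \mid \by_S, \concepts, \bX)} = (1-\omega)n\bigl[L(\concepts') - L(\concepts)\bigr] + o_p(n).
\]
Hence, when $L(\concepts') > L(\concepts)$, the acceptance probability of the move $\concepts \to \concepts'$ tends to $1$, while the reverse acceptance is $\exp(-\Theta(n))$.

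The second step exploits reversibility. Conditional on the random subset $S$, each coordinate update in Algorithm \ref{alg:MH_gibbs} is a valid Metropolis-Hastings step on the $k$-th coordinate and is therefore reversible with respect to any stationary distribution $\pi_n \in \Pi_n$. For neighboring supports $\concepts, \concepts'$ differing only in coordinate $k$, the one-step detailed balance yields
\[
    \frac{\pi_n(\concepts)}{\pi_n(\concepts')} = \frac{Q(\concept_k; \concepts'_{-k}, \by_S, \bX)\,\alpha(\concepts' \to \concepts)}{Q(\concept'_k; \concepts_{-k}, \by_S, \bX)\,\alpha(\concepts \to \concepts')},
\]
and the preceding display forces the $\alpha$-ratio to be $\exp(-\Theta(n))$ whenever $L(\concepts') > L(\concepts)$. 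The regularity conditions in Assumption \ref{assumption_supplement} ensure the $Q$-ratio is at most sub-exponential---i.e.\ the LLM proposal places non-negligible mass on each locally improving concept---so the exponential gap survives. To pass from ``$\concepts$ is locally dominated'' to ``$\concepts$ is globally dominated,'' I would then invoke a minimum-energy-path argument: for reversible kernels, $\pi_n(\concepts)/\pi_n(\mathcal{C}^*)$ is controlled by the ratio of optimal multi-step transition probabilities in the two directions, which is dominated by the largest ``barrier'' height along any connecting path. Because $\mathcal{C}^*$ is the global maximizer of $L$, the barrier from $\concepts$ to $\mathcal{C}^*$ is strictly smaller than from $\mathcal{C}^*$ to $\concepts$, so $\pi_n(\concepts)/\pi_n(\mathcal{C}^*) = \exp(-\Theta(n))$; summing over the finitely many concepts distinguishable on $\bX$ yields $\pi_n(\mathcal{C}^*) \to 1$ in probability.

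The main obstacles are (i) making the Laplace expansion \emph{uniform} over the data-dependent collection of supports the chain may propose, rather than a fixed finite family, which requires quantitative curvature and tail control on the log-likelihood; (ii) translating the regularity conditions of Assumption \ref{assumption_supplement} into a concrete guarantee that adversarial or mode-collapsing LLM proposals cannot starve every improvement path of probability mass; and (iii) handling local maxima of $L$ via the minimum-energy-path analysis, which is precisely where the global optimality of $\mathcal{C}^*$---rather than mere local optimality---enters the argument. For the multiple-try variant in Algorithm \ref{alg:MH_multi}, one must additionally verify that reversibility persists under the weighted selection among $M$ candidates; this is standard \citep{liu2000multiple}, and the same acceptance-ratio asymptotics carry over once the effective proposal is identified.
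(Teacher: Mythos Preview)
Your first step—establishing $\log p(\by_{S^c}\mid\by_S,\concepts,\bX) = (1-\omega)n\,L(\concepts) + o_p(n)$ via a Laplace-type expansion—matches the paper's Proposition~\ref{prop:concentration}, and you correctly flag the need for uniformity over supports in obstacle (i). The second step, however, has a genuine gap: the detailed-balance identity you invoke does not hold for this chain. For fixed $S$, the $k$-th coordinate update is an MH step, but it is reversible with respect to a target proportional to $p(\by_{S^c}\mid\by_S,(c,\concepts_{-k}),\bX)\,Q(c;\concepts_{-k},\by_S,\bX)$, a distribution depending on both $S$ and $k$—\emph{not} with respect to $\pi_n$. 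The full chain cycles deterministically through $k=1,\ldots,\nconcepts$ and resamples $S$ at every step; a systematic-scan composition of kernels each reversible with respect to a \emph{different} target is in general not reversible, and its stationary distribution satisfies no coordinate-wise detailed balance. Your equation $\pi_n(\concepts)/\pi_n(\concepts') = (Q\alpha)/(Q\alpha)$ is therefore unjustified, and the minimum-energy-path argument built on it has no foundation.

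The paper sidesteps reversibility of the full chain entirely. It passes to the $\nconcepts$-step subsequence chain and uses only the stationarity equation $vP=v$: partitioning the state space into $\mathcal{C}^*$ and its complement and writing $P$ in block form gives $v_{b+1:a}=v_{1:b}P_{12}(I-P_{22})^{-1}$. Concentration bounds every entry of $P_{12}$ (transitions out of $\mathcal{C}^*$) by $\exp(-n\Delta(1+o(1)))$. The device replacing your path heuristic is Assumption~\ref{assumption_supplement}(vii): from \emph{any} $\concepts_{-k}$, the LLM proposes a strictly $L$-improving optimal concept with probability at least $\eta$, and by concentration that move is accepted with probability $1$. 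Hence from any suboptimal state there is a length-$\nconcepts$ sequence of coordinate updates landing in $\mathcal{C}^*$ with probability at least $\eta^{\nconcepts}$, which yields $\|P_{22}\|_{op}\le 1-\eta^{\nconcepts}$ via a row-sum bound. Combining these gives $\|v_{b+1:a}\|$ exponentially small in $n$, uniformly over connected components, without ever appealing to detailed balance of the composite kernel.
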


\subsection{Putting it together: BC-LLM} \label{subsec:full_method}

We now put all the pieces together.
Mathematically, BC-LLM runs Metropolis-within-Gibbs (Algorithm~\ref{alg:MH_gibbs}) with a split-sample update (Algorithm~\ref{alg:MH_v2} or \ref{alg:MH_multi}).
We translate this into a step-by-step recipe (Fig~\ref{fig:intro}): After initializing BC-LLM in Step 0, each iteration of Metropolis-within-Gibbs (Steps 1 to 4) performs a split-sample update to output a CBM.
The CBMs form a posterior distribution, as well as an ensemble prediction model. Example prompts and implementation details are in the Appendix.

\textbf{Step 0: Initialization and keyphrase extraction}
We first use an LLM to extract ``keyphrases'' from each observation $x$, which will  be used by the LLM to brainstorm candidate concepts.
Here, ``keyphrases'' refer to short phrases: keyphrases for images describe what's in the image (e.g. blue eyes), while keyphrases for text data can be direct quotes or summarizations (e.g. diabetes) \citep{Papagiannopoulou2020-vl}.
Keyphrases can be thought of as rough concepts, as they do not define a function but can help an LLM come up with a formal concept question (e.g. ``diabetes'' is a keyphrase whereas ``Does the patient currently have diabetes?'' is a concept).
This brainstorming phase does not restrict the set of keyphrases the LLM can extract to allow for a wide exploration of concepts.
The tradeoff is that the extracted keyphrases may be an incomplete summary of each observation.
To initialize the CBM sampling procedure, these keyphrases are used to fit a (penalized) LR model for the target $Y$, similar to fitting a bag-of-words (BoW) model.
An LLM then analyzes the most predictive keyphrases to generate an initial set of $\nconcepts$ concepts and extracts these concepts from each observation.

\textbf{Step 1: Drop a concept \& fit a keyphrase model}
Each iteration proposes candidate concepts based on some data subset $S$ to replace the $k$th concept for $k \in \{1,\cdots, K\}$.
Given the context limits of LLMs, having an LLM review each observation in $S$ individually is ineffective.
Instead, we generate a summary by highlighting the top keyphrases in $S$ for predicting $Y$, on top of the existing $\nconcepts - 1$ concepts $\concepts_{-k}$.
For continuous outcomes, this can be accomplished by finding the top keyphrases associated with the residuals of a model that predicts $Y$ given $\concepts_{-k}$.
For binary/categorical outcomes, we instead fit a (multinomial) LR model of $Y$, where the inputs are keyphrases encoded using BoW and annotated concepts $\concepts_{-k}$.
Let the coefficients in this ``keyphrase model'' $f$ for keyphrases and concepts be denoted $\beta_W$ and $\beta_{\concepts_{-k}}$, respectively.
We fit it by solving $\min_{\beta_{\concepts_{-k}}, \beta_W} \frac{1}{n} \sum_{i=1}^n \ell\left(y_i, f\left(x_i|\beta_{\concepts_{-k}}, \beta_{W} \right) \right) + \lambda J(\beta_W)$, where $\ell$ is the logistic loss, $\beta_W$ is penalized with some penalty function $J$, and penalty parameter $\lambda$ is selected by cross-validation.
We found the ridge penalty to work well in practice but other penalties may be used instead.

\textbf{Step 2: Query the LLM for candidate concepts}
Given the summary of the top keyphrases generated from Step 1, the LLM is asked to propose $M$ candidates for the $k$-th concept. 
In addition, we ask the LLM to provide the value of the partial posterior probability for each candidate concept.
For instance, if the top keyphrases include ``chronic kidney disease'' and ``acute kidney injury,'' the LLM may propose a concept like ``Does the patient have a history of kidney disease?'' with an assigned partial posterior probability of 0.4.

\textbf{Step 3: Concept annotation}
Given candidate concepts, we use the LLM to extract their values using a zero-shot approach, i.e. $\phi_{c}(x)$ for $c \in \{\check\concept^{(1)}_k,\ldots,\check\concept^{(M)}_k\}$.
Because \textit{multiple-try} Metropolis-within-Gibbs proposes a batch of candidate concepts, we can batch their extractions into a single LLM query for each observation.
This makes \method{} more cost/query-efficient than ordinary Metropolis-within-Gibbs, which outputs only a single concept per iteration and thus cannot be batched.
Note that letting the LLM occasionally output probabilities when it is unsure can be helpful, though it is critical to avoid information leakage (e.g. no joint training of concept extraction and label prediction).

\textbf{Step 4: Accept/reject step}
The final step is to compute the sampling probabilities for the candidate concepts and the acceptance ratio, which involve comparing the proposal probabilities  for the existing and candidate concept sets $\concepts$ as well as their split-sample posteriors
$
p(\by_{S^c}|\by_S,\concepts,\bX) = \int p(\by_{S^c}|\model,\concepts,\bX)p(\model|\by_S,\concepts,\bX)d\model.
$
To estimate the integral, one approach is to sample from the posterior on $\vec{\theta}$, but this can be computationally expensive since one would need to implement posterior sampling in inner and outer loops.
Alternatively, in the case of LR-based CBMs, one can use a Laplace-like approximation \citep{Lewis1997-wl} (see Appendix).
As the posterior distribution of LR parameters are asymptotically normal, this approximation becomes increasingly accurate as $n\rightarrow\infty$.

\textbf{Computational cost.}
\method{} performs $O(nT\nconcepts)$ LLM queries, where $T$ is the number of outer loops.
$\nconcepts$ is typically chosen to be small, because a large $\nconcepts$ yields less interpretable CBMs.
Although standard Bayesian procedures typically draw thousands of posterior samples, we found that a small $T$ was quite effective in practice when combined with a greedy warm-start procedure; our experiments all use $T = 5$.
Generally speaking, a small $T$ suffices for achieving high prediction accuracy, a larger $T$ (e.g., 10) is helpful for quantifying the uncertainty of relevant concepts.
Critically, running BC-LLM is significantly more cost-effective than standard CBMs, which require $O(nW)$ queries where $W$ is the number of pre-specified concepts and must be large (typically hundreds or thousands) to avoid missing relevant concepts \citep{Oikarinen2023-ee}.
In our experiments, BC-LLM was quick to run, as one iteration takes less than a few minutes on a normal laptop. \red{check timings a bit more carefully}

\section{Experiments}
\label{sec:exper_results}

We evaluated \method{} in across three domains and modalities:
classifying birds in images (\cref{sec:birds}),
simulated outcomes from clinical notes (\cref{sec:mimic}),
and readmission risk in real-world clinical data (\cref{sec:zsfg}).
The experiments below used GPT-4o-mini \citep{openai2023gpt4} and Section~\ref{sec:zsfg} used a protected health information-compliant version of GPT-4o for real-world clinical notes.
The Appendix includes results using other LLMs, implementation details, and experimental settings.

\textbf{Baselines.} 
We compared against CBMs fit using the standard approach where humans select and annotate concepts (\texttt{Human+CBM}) \citep{koh2020concept},
CBMs where an LLM brainstorms concepts after seeing which keyphrases are most associated with the label but without any iteration (\texttt{LLM+CBM}) \citep{pham2023topicgpt,Benara2024-lw}, and CBMs fit using a boosting algorithm that iteratively adds concepts by having an LLM analyze observations with large residuals (\texttt{Boosting LLM+CBM}) \citep{ludan2023interpretable, Liu2024-kq}.
We include black-box and semi-interpretable models to assess the tradeoff between interpretability and accuracy.
For the bird-classification dataset, we also compare to a CBM  that uses LLM-suggested concepts rather than keyword-based concepts; we use the 370 LLM-suggested concepts from~\citep{Oikarinen2023-ee} and extract the values of these concepts with an LLM (\texttt{LLM+CBM (No keyphrases)}).

\textbf{Evaluation metrics.}
Methods were evaluated with respect to \textit{predictive performance} as measured by AUC and/or accuracy and \textit{uncertainty quantification} as measured by Brier score.
When true concepts are known, we also quantify \textit{concept selection rates} using ``concept precision,'' as defined by $\frac{1}{K}\sum_{k=1}^K p(\hat{C}_k \in \concepts^* \mid \by,\bX)$, and ``concept recall,'' as defined by $ \frac{1}{K}\sum_{k=1}^K p(\concept_k^* \in \hat{\vec C} \mid \by,\bX)$,
where $\hat{\vec C}$ is the posterior distribution over concept sets from \method{} or a single set of concepts learned by a non-Bayesian procedure.
Concept matches were verified manually (see Appendix for details).

\subsection{CBMs for classifying bird images}
\label{sec:birds}

We first evaluate how well \method{} can learn concepts that differentiate between bird species within the same family in the standard CUB-birds image dataset~\citep{welinder2010caltech}.
Grouping the 200 bird species into their respective families, this resulted in 37 prediction tasks.
We considered family-based prediction tasks because there likely exists a parsimonious CBM within each bird family, in contrast to the task of predicting between all 200 species.
To assess how well \method{} performs in settings with limited prior knowledge, we do \textit{not} tell the LLM that we are predicting bird species.
That is, we replace the labels with one-hot encodings and simply tell the LLM to find concepts for predicting $Y$.
Consequently, the LLM must search over a larger space of concepts.

The data is split 50/50 between training and testing. The number of concepts $K$ learned for each task was set to the number of classes, but no smaller than 4 and no greater than 10.
For the black-box comparator, we fine-tuned the last layer of ResNet50 pre-trained on ImageNetV2~\citep{He2016-rc, recht2019imagenet}.
\texttt{Human+CBM} was trained on the 312 human-annotated features available in the CUB-birds dataset.

\begin{table}
\hspace{-1cm}
    \centering
    \caption{Performance of CBMs and black-box models (ResNet) for classifying bird species.
    }
    \label{table:bird}
    \small
    \begin{tabular}{l|lll|l}
\toprule
& \multicolumn{3}{c|}{\textit{In-distribution}}& \textit{OOD}\\
Method & Accuracy ($\uparrow$) & AUC ($\uparrow$) & Brier ($\downarrow$) & Entropy ($\uparrow$) \\
\midrule
BC-LLM & \textbf{0.680} \err{0.614, 0.747} & \textbf{0.874} \err{0.840, 0.907} & \textbf{0.428} \err{0.357, 0.500} & 0.865 \err{0.693, 1.036}\\
LLM+CBM & 0.640 \err{0.573, 0.707} & 0.810 \err{0.768, 0.853} & 0.452 \err{0.377, 0.528} & 0.663 \err{0.474, 0.852}\\
Boosting LLM+CBM & 0.538 \err{0.463, 0.614} & 0.722 \err{0.673, 0.772} & 0.577 \err{0.499, 0.654} & 0.842 \err{0.630, 1.054}\\
Human+CBM & 0.658 \err{0.591, 0.725} & 0.835 \err{0.791, 0.879} & 0.499 \err{0.414, 0.584} & 0.758 \err{0.558, 0.959}\\
LLM+CBM (No keyphrases) & 0.555 \err{0.488, 0.623} & 0.759 \err{0.713, 0.805} & 0.651 \err{0.548, 0.754} & 0.626 \err{0.495, 0.757}\\
\midrule
ResNet & 0.664 \err{0.613, 0.716} & 0.853 \err{0.821, 0.885} & 0.457 \err{0.398, 0.516} & \textbf{0.914} \err{0.748, 1.079} \\
\bottomrule
\end{tabular}
\end{table}

\method{} achieved higher accuracy and better calibration on average, compared to all the other fully interpretable CBMs (Table~\ref{table:bird}).
Notably, soft CBMs no longer performed as well once the concepts were constrained to be fully interpretable.
\method{} also outperformed CBMs trained on human collated and annotated features, potentially because it was able to consider additional concepts.
More critically, BC-LLM outperformed ResNet50 in both accuracy and calibration, as ResNet50 easily overtrained on the small bird datasets.
In contrast, \method{} could identify relevant concepts for a simple parametric model, leading to faster model convergence.

We also assessed \method's robustness in OOD settings, as the Bayesian framing should improve uncertainty quantification (Table~\ref{table:bird}).
For each model, we used it to classify birds for a species it was not trained on.
Compared to other CBM methods, \method{} was more appropriately unsure, yielding higher entropy across the output classes.
We note that ResNet model had the highest entropy, though it has the advantage of not being limited to extracting a small set of interpretable concepts.
To illustrate the benefit of using fully interpretable CBMs in OOD settings, 
Figure~\ref{fig:birds} (right) shows the difference when applying the Bunting bird CBM to an actual bunting bird versus an OOD sample.
Whereas the CBM extracted zeros and ones for the actual bunting bird, it was unsure how to extract the concept ``Does this image show vibrant feathers?'' for the image of a dog wearing rainbow wings. The LLM thus outputted a probability of 0.5, citing that ``the image contains vibrant colors but the wings do not actually contain feathers.''
This shows how having an LLM output its reasoning when answering concrete concept questions can reveal situations that need human intervention.

Finally, we visualize the posterior distribution over concepts using hierarchical clustering (Fig~\ref{fig:birds} left).
Continuing with the Bunting bird example, we see that BC-LLM converges towards distinguishing concepts as training data increases and away from vague or non-specific concepts (e.g. ``sharp beaks'' apply to all bunting birds).
Moreover, the number of unique concepts decreases with the amount of training data, because it has converged to concepts that truly distinguish the different species.

\begin{figure}[t]
\centering
\includegraphics[width=0.55\linewidth]{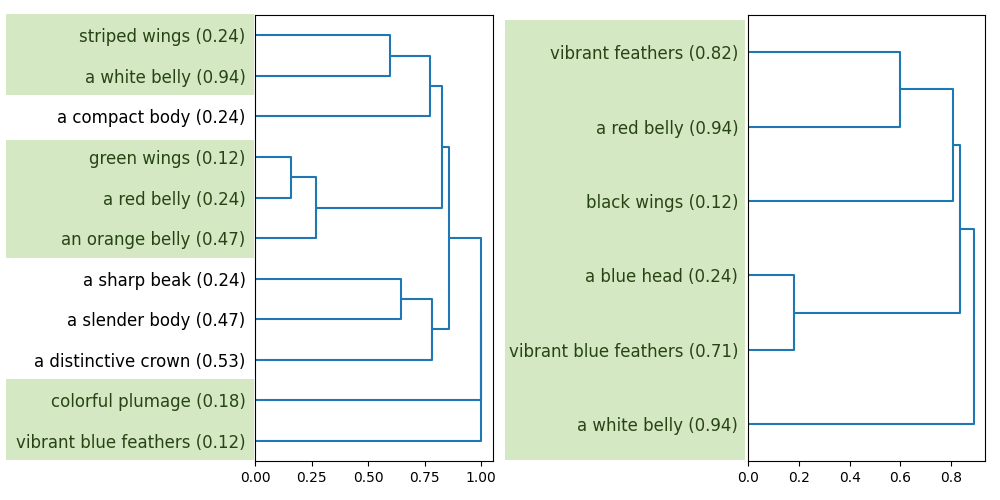}
\includegraphics[width=0.4\linewidth]{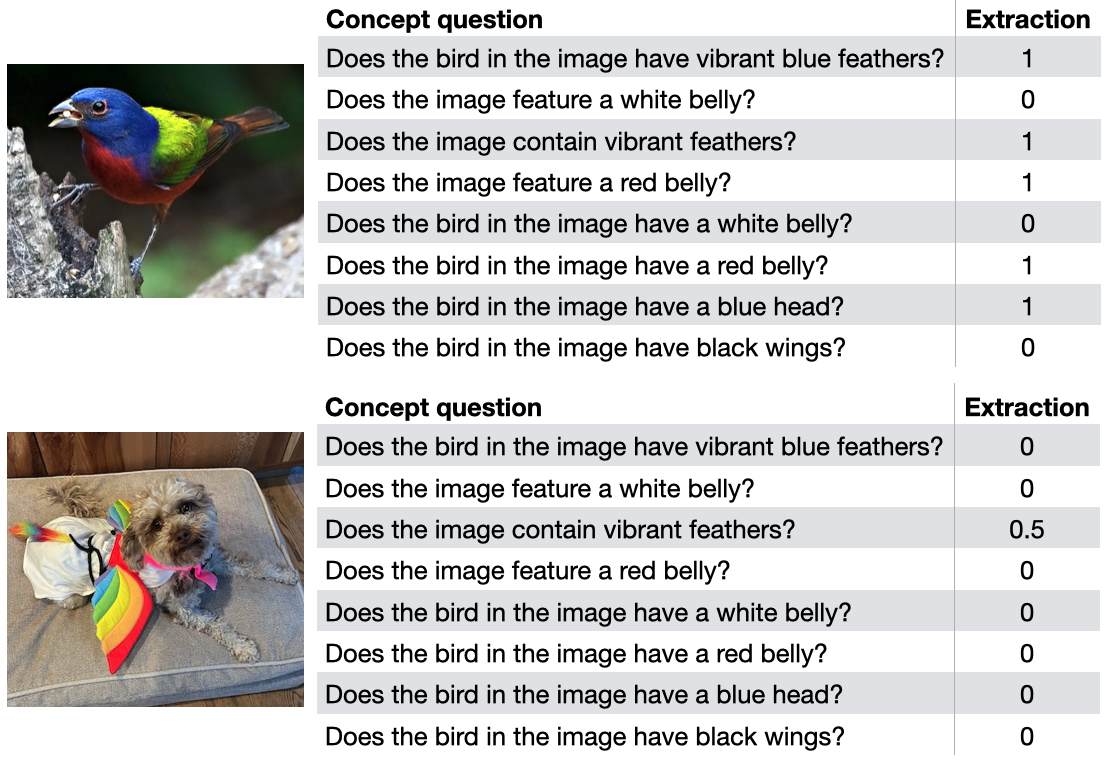}
\caption{
    Example of BC-LLM classifying Bunting birds.
    (Left) Left and right dendrograms are learned concepts when given 1/3 versus 3/3 of the training data, respectively.
    Labels are shortened concept questions, generally of the form ``Does the image depict...?''
    Proportion of posterior samples with the concept are shown in parentheses.
    Highlighted labels are distinguishing bird features.
    (Right) Application of BC-LLM to an actual bunting bird (top) versus a dog pretending to be one (bottom).
    }
    \label{fig:birds}
\end{figure}

\subsection{CBMs for classifying clinical notes with simulated outcomes}
\label{sec:mimic}

\begin{figure}
    \centering
\includegraphics[width=.418\linewidth]{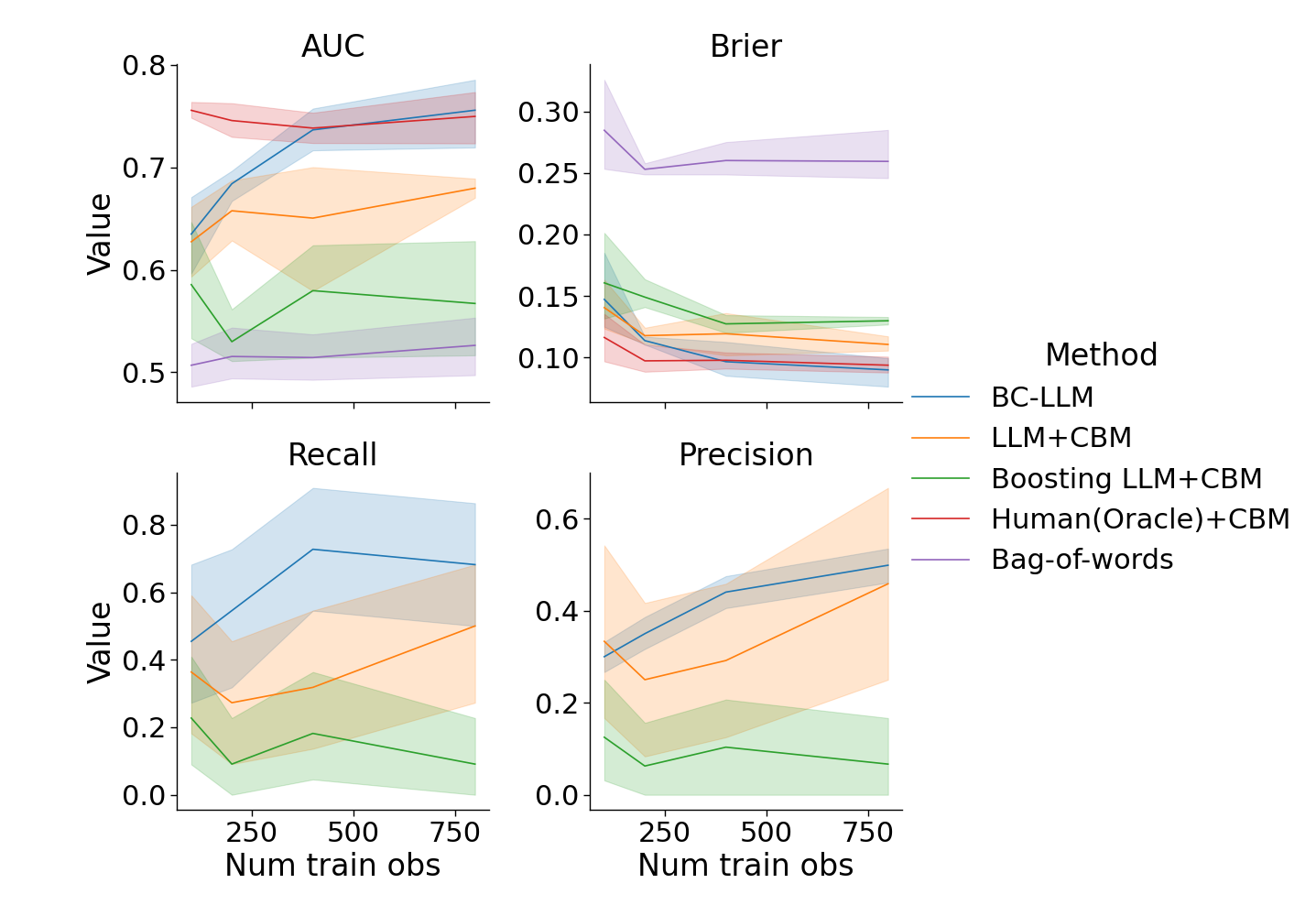}
    \includegraphics[width=.572\linewidth]{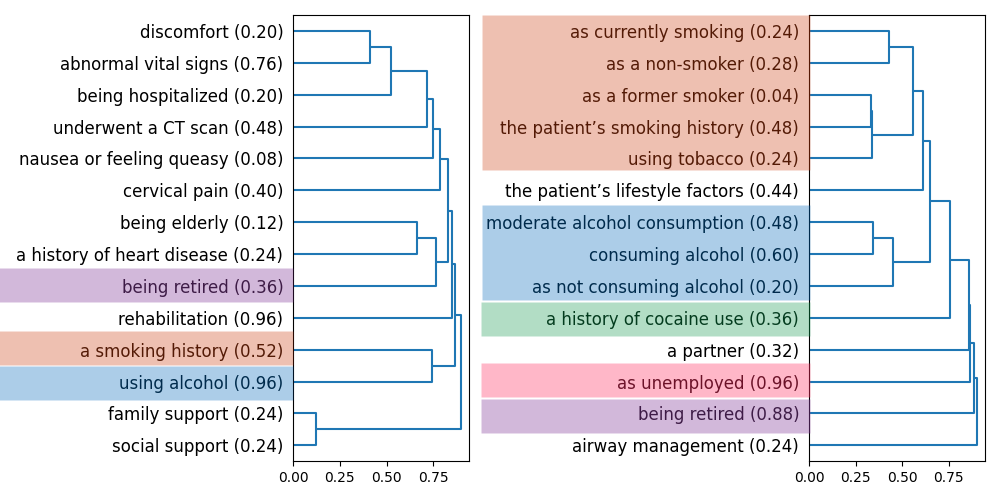}
    \vspace{-0.3cm}
\caption{MIMIC results:
    (Left) Comparison of \method{} and existing methods in terms of performance and recovery of true concepts with 95\% CI.
    (Right) Dendrograms of concepts learned by \method{} with 100 and 800 observations (left and right, respectively).
    Labels are shortened concept questions generally of the form ``Does the note mention the patient...?''
    Highlighted labels correspond to the true color-coded concepts in Section~\ref{sec:mimic}.
}
\label{fig:mimic}
\end{figure}

To objectively measure how well BC-LLM can recover true concepts, we conducted a simulation study where the true concepts are known.
To assess BC-LLM in settings where the LLM has no prior knowledge about the target, we simulated $Y$ using a LR model with five Social Determinants of Health as inputs, which were annotated from real-world patient notes in MIMIC-IV \citep{johnson2023mimic, Ahsan2021-hx}:\\
\begin{quote}
{
\footnotesize
\textcolor{RoyalBlue}{\textbf{1.} Does the note mention the patient consuming alcohol in the present or the past?}
}\\
{
\footnotesize
{\color{Bittersweet}\textbf{2.} Does the note mention the patient smoking in the present or the past?}
}\\
{
\footnotesize
\color{ForestGreen}\textbf{3.} Does the note mention the patient using recreational drugs in the present or the past?
}\\
{\footnotesize\color{WildStrawberry}\textbf{4.} Does the note imply the patient is unemployed?}\\
{\footnotesize\color{Purple}\textbf{5.} Does the note imply the patient is retired?}\\
\end{quote}
CBMs with $\nconcepts$=6 were trained on 100 to 800 observations. \texttt{Human(Oracle)+CBM} uses true concepts.

\method{} outperforms the comparator methods both in predictive performance and calibration (Figure~\ref{fig:mimic}a) and, with only 400 observations, performs as well as \texttt{Human(Oracle)+CBM}.
The performance gap between \method{} and the comparator methods widens with more training observations because \method{} can iteratively refine its concepts.
Although \texttt{Boosting LLM+CBM} is also an iterative procedure, it struggles to hypothesize relevant concepts because it reviews only a few misclassified observations each iteration and cannot revise concepts added in previous iterations.
Additionally, \method{} was better at recovering the true concepts compared to existing methods.

Visualizing the posterior distribution using hierarchical clustering (Fig~\ref{fig:mimic} right),
the learned posterior notably contracts towards the true concepts with increasing data as the number of training observations increases from 100 to 800.
Moreover, BC-LLM remains appropriately unsure about certain concepts that are highly correlated.
For instance, it states many variations of the smoking concept that are essentially impossible to distinguish between, which is both expected and desired.

\subsection{Augmenting a tabular model with clinical notes}
\label{sec:zsfg}

Here we demonstrate how \method{} can help ML developers integrate different data modalities to improve model performance.
The data science team at Zuckerberg San Francisco General Hospital has both tabular data and unstructured notes from the electronic health record (EHR).
The team has so far fit a model using the tabular data to predict readmission risk for heart failure patients and wants to assess if clinical notes contain additional concepts that are predictive.
To address this, we extend \method{} to take as inputs (i) the risk prediction from the original tabular model and (ii) $\nconcepts=4$ concepts extracted from the clinical notes, which can be viewed as revising the existing model by adding features.
The CBMs were trained on 1000 patients and evaluated on 500 held-out patients.

The original tabular model, trained on lab and flowsheet values, achieved an AUC of 0.60.
\method{} revised this model to perform substantially better, achieving an AUC of 0.64 (Fig~\ref{fig:clinical_notes} left).
Moreover, it outperforms comparator methods with respect to both AUC and brier score.

To assess the interpretability of the learned concepts, four clinicians were asked to rate concepts in the CBMs as well as top features in the original tabular model by how predictive they were, from 1$=$low to 3$=$high clinical relevance.\footnote{Tabular features were rewritten as questions so they could not be distinguished from learned concepts.
See Appendix for annotation instructions.}
Concepts from \method{} received an average rating of 2.5, whereas concepts from the other methods were rated $\le$2.1 (Fig~\ref{fig:clinical_notes} right).
When unblinded to the concepts, clinicians commented on a number of advantages to using BC-LLM.
First, many of the learned concepts were known to be causally relevant but were difficult to engineer solely from tabular data.
For example, information for determining ``substance dependence" is rarely documented in the tabular data but often noted in clinical notes.
Second, BC-LLM also suggested new features to engineer that the data science team had not previously considered.
Most interestingly, some of the learned concepts pointed to interventions that the hospital could try to implement, such as ensuring follow-up appointments were scheduled and opting for medication regimes that were easier to follow.
In contrast, the top features in the original tabular model were difficult to interpret and not actionable.

\begin{figure}
    \centering
\vspace{-0.8cm}
    \hspace{-0.75cm}
\resizebox{0.44\linewidth}{!}{
    \begin{tabular}{l|c|c}
        Method & AUC (95\% CI) & Brier (95\% CI) \\
        \toprule
\method & \textbf{0.64} \err{0.58, 0.70} & \textbf{0.14} \err{0.12, 0.62} \\
LLM+CBM & 0.59 \err{0.52, 0.65} & 0.29 \err{0.25, 0.33}\\ Boosting LLM+CBM & 0.59 \err{0.52, 0.66} & \textbf{0.14} \err{0.12, 0.16}\\
        Bag-of-words & 0.52 \err{0.46, 0.58} & 0.29 \err{0.25, 0.34}\\
    \end{tabular}
    }
    \raisebox{-.5\height}{\includegraphics[width=.38\linewidth]{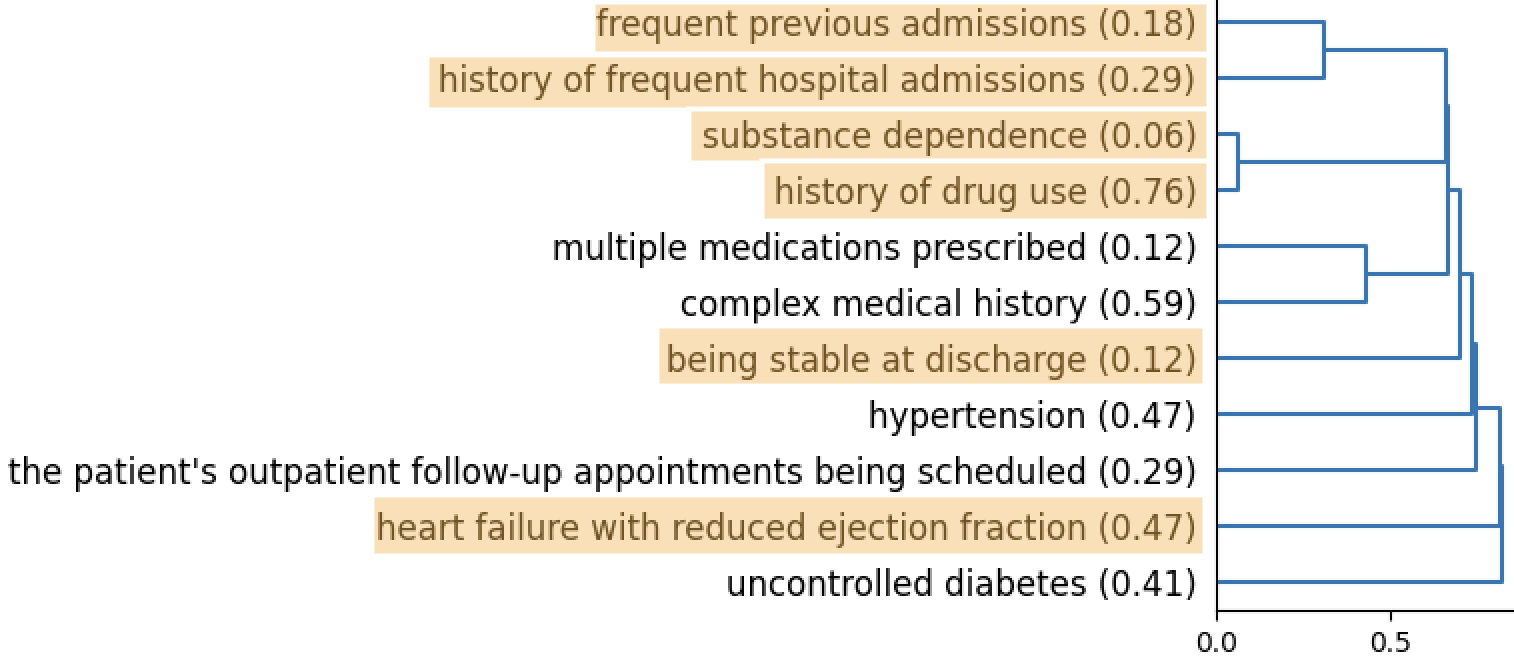}}
    \raisebox{-.5\height}{\includegraphics[width=.18\linewidth]{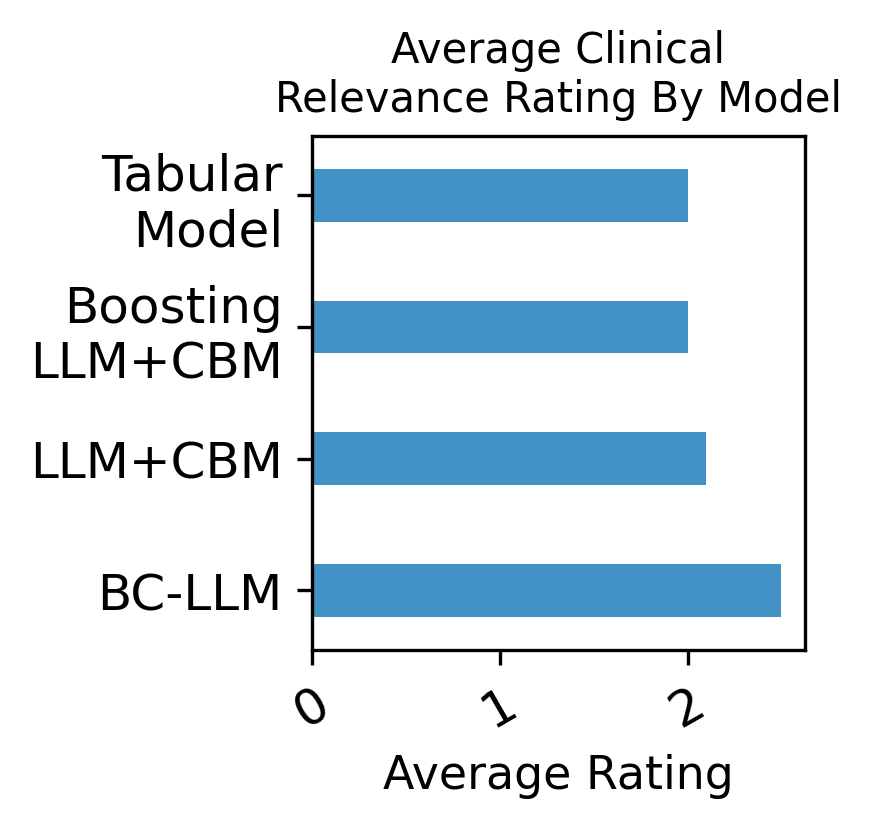}}
\vspace{-0.1cm}
    \caption{Learning to augment a readmission risk prediction model for heart failure patients.
    Dendrogram labels are shortened questions of the format ``Does the note mention the patient having...?''.
    Highlighted concepts received scores from clinicians as being highly predictive (scores 2.5+).
    Average clinician ratings for concepts/features from the different methods are shown on the right.
    }
    \vspace{-0.3cm}
    \label{fig:clinical_notes}
\end{figure}

\section{Discussion}
\method{} is a new approach to learning CBMs that iteratively proposes concepts using an LLM within a Bayesian framework, which allows for rigorous uncertainty quantification despite LLMs being prone to error and hallucinations.
Because it explores and suggests concepts in a data-adaptive manner, BC-LLM is particularly well-suited for settings with limited prior knowledge about which concepts are relevant or where the number of potentially relevant concepts is infinite.
The method is compatible with various data modalities (text, images, and tabular data) and can be extended beyond the settings of binary and multiclass classification.
The empirical results show that \method{} outperforms existing methods, even black-box models in certain settings.

\textit{Future work.} BC-LLM is currently designed for learning highly interpretable CBMs where the number of concepts $K$ is typically no more than $20$.
While BC-LLM can be applied to learn even more concepts, future directions can consider further speeding up posterior inference, such as through mini-batching of observations or concepts.

\textit{Impact Statement.}
The interpretability provided by BC-LLM additionally enables better understanding of the underlying algorithm and facilitates human oversight, which may improve the safety of AI algorithms and AI-based decision-making.
The use of LLMs incurs significant computational cost and corresponding environmental impacts; however, BC-LLM is much more computationally efficient than comparable LLM-based CBMs, potentially reducing overall environmental impacts.

\section*{Acknowledgments}

This work was funded through a Patient-Centered Outcomes Research Institute® (PCORI®) Award
(ME-2022C1-25619). The views presented in this work are solely the responsibility of the author(s)
and do not necessarily represent the views of the PCORI®, its Board of Governors or Methodology
Committee. The authors thank the UCSF AI Tiger Team, Academic Research Services, Research
Information Technology, and the Chancellor’s Task Force for Generative AI for their technical support
related to the use of Versa API gateway. The authors thank Patrick Vossler, Yifan Mai, Seth Goldman,
Andrew Bishara, James Marks, Julian Hong, Aaron Kornblith, Nicholas Petrick, and Gene Pennello for
their invaluable feedback on the work.

\newpage

\bibliographystyle{unsrt}
\bibliography{refs.bib}

\section*{NeurIPS Paper Checklist}

\begin{enumerate}

\item {\bf Claims}
    \item[] Question: Do the main claims made in the abstract and introduction accurately reflect the paper's contributions and scope?
    \item[] Answer: \answerYes{}
    \item[] Justification: We carefully ensure the claims made in our abstract and introduction and clearly defined and justified through our methodology (Section \ref{sec:method}) and experimental results (Section \ref{sec:exper_results}).
    \item[] Guidelines:
    \begin{itemize}
        \item The answer NA means that the abstract and introduction do not include the claims made in the paper.
        \item The abstract and/or introduction should clearly state the claims made, including the contributions made in the paper and important assumptions and limitations. A No or NA answer to this question will not be perceived well by the reviewers. 
        \item The claims made should match theoretical and experimental results, and reflect how much the results can be expected to generalize to other settings. 
        \item It is fine to include aspirational goals as motivation as long as it is clear that these goals are not attained by the paper. 
    \end{itemize}

\item {\bf Limitations}
    \item[] Question: Does the paper discuss the limitations of the work performed by the authors?
    \item[] Answer: \answerYes{}
    \item[] Justification: The discussion section mentions that the current method is primarily suitable for learning smaller, highly interpretable CBMs. Future work can further scale up the procedure.
    \item[] Guidelines:
    \begin{itemize}
        \item The answer NA means that the paper has no limitation while the answer No means that the paper has limitations, but those are not discussed in the paper. 
        \item The authors are encouraged to create a separate "Limitations" section in their paper.
        \item The paper should point out any strong assumptions and how robust the results are to violations of these assumptions (e.g., independence assumptions, noiseless settings, model well-specification, asymptotic approximations only holding locally). The authors should reflect on how these assumptions might be violated in practice and what the implications would be.
        \item The authors should reflect on the scope of the claims made, e.g., if the approach was only tested on a few datasets or with a few runs. In general, empirical results often depend on implicit assumptions, which should be articulated.
        \item The authors should reflect on the factors that influence the performance of the approach. For example, a facial recognition algorithm may perform poorly when image resolution is low or images are taken in low lighting. Or a speech-to-text system might not be used reliably to provide closed captions for online lectures because it fails to handle technical jargon.
        \item The authors should discuss the computational efficiency of the proposed algorithms and how they scale with dataset size.
        \item If applicable, the authors should discuss possible limitations of their approach to address problems of privacy and fairness.
        \item While the authors might fear that complete honesty about limitations might be used by reviewers as grounds for rejection, a worse outcome might be that reviewers discover limitations that aren't acknowledged in the paper. The authors should use their best judgment and recognize that individual actions in favor of transparency play an important role in developing norms that preserve the integrity of the community. Reviewers will be specifically instructed to not penalize honesty concerning limitations.
    \end{itemize}

\item {\bf Theory assumptions and proofs}
    \item[] Question: For each theoretical result, does the paper provide the full set of assumptions and a complete (and correct) proof?
    \item[] Answer: \answerYes{}
    \item[] Justification: Assumptions are stated in Section \ref{sec:method} and full proofs for theorems can be found in the Appendix 
    \item[] Guidelines:
    \begin{itemize}
        \item The answer NA means that the paper does not include theoretical results. 
        \item All the theorems, formulas, and proofs in the paper should be numbered and cross-referenced.
        \item All assumptions should be clearly stated or referenced in the statement of any theorems.
        \item The proofs can either appear in the main paper or the supplemental material, but if they appear in the supplemental material, the authors are encouraged to provide a short proof sketch to provide intuition. 
        \item Inversely, any informal proof provided in the core of the paper should be complemented by formal proofs provided in appendix or supplemental material.
        \item Theorems and Lemmas that the proof relies upon should be properly referenced. 
    \end{itemize}

    \item {\bf Experimental result reproducibility}
    \item[] Question: Does the paper fully disclose all the information needed to reproduce the main experimental results of the paper to the extent that it affects the main claims and/or conclusions of the paper (regardless of whether the code and data are provided or not)?
    \item[] Answer: \answerYes{}
    \item[] Justification: Experiment details are found in Section \ref{sec:exper_results} and reproducible code is found at our github. 
    \item[] Guidelines:
    \begin{itemize}
        \item The answer NA means that the paper does not include experiments.
        \item If the paper includes experiments, a No answer to this question will not be perceived well by the reviewers: Making the paper reproducible is important, regardless of whether the code and data are provided or not.
        \item If the contribution is a dataset and/or model, the authors should describe the steps taken to make their results reproducible or verifiable. 
        \item Depending on the contribution, reproducibility can be accomplished in various ways. For example, if the contribution is a novel architecture, describing the architecture fully might suffice, or if the contribution is a specific model and empirical evaluation, it may be necessary to either make it possible for others to replicate the model with the same dataset, or provide access to the model. In general. releasing code and data is often one good way to accomplish this, but reproducibility can also be provided via detailed instructions for how to replicate the results, access to a hosted model (e.g., in the case of a large language model), releasing of a model checkpoint, or other means that are appropriate to the research performed.
        \item While NeurIPS does not require releasing code, the conference does require all submissions to provide some reasonable avenue for reproducibility, which may depend on the nature of the contribution. For example
        \begin{enumerate}
            \item If the contribution is primarily a new algorithm, the paper should make it clear how to reproduce that algorithm.
            \item If the contribution is primarily a new model architecture, the paper should describe the architecture clearly and fully.
            \item If the contribution is a new model (e.g., a large language model), then there should either be a way to access this model for reproducing the results or a way to reproduce the model (e.g., with an open-source dataset or instructions for how to construct the dataset).
            \item We recognize that reproducibility may be tricky in some cases, in which case authors are welcome to describe the particular way they provide for reproducibility. In the case of closed-source models, it may be that access to the model is limited in some way (e.g., to registered users), but it should be possible for other researchers to have some path to reproducing or verifying the results.
        \end{enumerate}
    \end{itemize}

\item {\bf Open access to data and code}
    \item[] Question: Does the paper provide open access to the data and code, with sufficient instructions to faithfully reproduce the main experimental results, as described in supplemental material?
    \item[] Answer: \answerYes{}
    \item[] Justification: A link to the source code is provided, with specification of all data, dependencies, instructions to reproduce results. The only experiment that cannot be reproduced is Section \ref{sec:zsfg} due to the use of PHI data. 
    \item[] Guidelines:
    \begin{itemize}
        \item The answer NA means that paper does not include experiments requiring code.
        \item Please see the NeurIPS code and data submission guidelines (\url{https://nips.cc/public/guides/CodeSubmissionPolicy}) for more details.
        \item While we encourage the release of code and data, we understand that this might not be possible, so “No” is an acceptable answer. Papers cannot be rejected simply for not including code, unless this is central to the contribution (e.g., for a new open-source benchmark).
        \item The instructions should contain the exact command and environment needed to run to reproduce the results. See the NeurIPS code and data submission guidelines (\url{https://nips.cc/public/guides/CodeSubmissionPolicy}) for more details.
        \item The authors should provide instructions on data access and preparation, including how to access the raw data, preprocessed data, intermediate data, and generated data, etc.
        \item The authors should provide scripts to reproduce all experimental results for the new proposed method and baselines. If only a subset of experiments are reproducible, they should state which ones are omitted from the script and why.
        \item At submission time, to preserve anonymity, the authors should release anonymized versions (if applicable).
        \item Providing as much information as possible in supplemental material (appended to the paper) is recommended, but including URLs to data and code is permitted.
    \end{itemize}

\item {\bf Experimental setting/details}
    \item[] Question: Does the paper specify all the training and test details (e.g., data splits, hyperparameters, how they were chosen, type of optimizer, etc.) necessary to understand the results?
    \item[] Answer: \answerYes{}
    \item[] Justification: Experimental details can be found in the Appendix and Section \ref{sec:exper_results}
    \item[] Guidelines:
    \begin{itemize}
        \item The answer NA means that the paper does not include experiments.
        \item The experimental setting should be presented in the core of the paper to a level of detail that is necessary to appreciate the results and make sense of them.
        \item The full details can be provided either with the code, in appendix, or as supplemental material.
    \end{itemize}

\item {\bf Experiment statistical significance}
    \item[] Question: Does the paper report error bars suitably and correctly defined or other appropriate information about the statistical significance of the experiments?
    \item[] Answer: \answerYes{}
    \item[] Justification: Confidence intervals for performance measures are provided in all figures/tables in Section \ref{sec:exper_results}
    \item[] Guidelines:
    \begin{itemize}
        \item The answer NA means that the paper does not include experiments.
        \item The authors should answer "Yes" if the results are accompanied by error bars, confidence intervals, or statistical significance tests, at least for the experiments that support the main claims of the paper.
        \item The factors of variability that the error bars are capturing should be clearly stated (for example, train/test split, initialization, random drawing of some parameter, or overall run with given experimental conditions).
        \item The method for calculating the error bars should be explained (closed form formula, call to a library function, bootstrap, etc.)
        \item The assumptions made should be given (e.g., Normally distributed errors).
        \item It should be clear whether the error bar is the standard deviation or the standard error of the mean.
        \item It is OK to report 1-sigma error bars, but one should state it. The authors should preferably report a 2-sigma error bar than state that they have a 96\% CI, if the hypothesis of Normality of errors is not verified.
        \item For asymmetric distributions, the authors should be careful not to show in tables or figures symmetric error bars that would yield results that are out of range (e.g. negative error rates).
        \item If error bars are reported in tables or plots, The authors should explain in the text how they were calculated and reference the corresponding figures or tables in the text.
    \end{itemize}

\item {\bf Experiments compute resources}
    \item[] Question: For each experiment, does the paper provide sufficient information on the computer resources (type of compute workers, memory, time of execution) needed to reproduce the experiments?
    \item[] Answer: \answerYes{} \item[] Justification: The main manuscript discusses compute time and resources.
    \item[] Guidelines:
    \begin{itemize}
        \item The answer NA means that the paper does not include experiments.
        \item The paper should indicate the type of compute workers CPU or GPU, internal cluster, or cloud provider, including relevant memory and storage.
        \item The paper should provide the amount of compute required for each of the individual experimental runs as well as estimate the total compute. 
        \item The paper should disclose whether the full research project required more compute than the experiments reported in the paper (e.g., preliminary or failed experiments that didn't make it into the paper). 
    \end{itemize}
    
\item {\bf Code of ethics}
    \item[] Question: Does the research conducted in the paper conform, in every respect, with the NeurIPS Code of Ethics \url{https://neurips.cc/public/EthicsGuidelines}?
    \item[] Answer: \answerYes{}
    \item[] Justification: All ethical guidelines were followed.
    \item[] Guidelines:
    \begin{itemize}
        \item The answer NA means that the authors have not reviewed the NeurIPS Code of Ethics.
        \item If the authors answer No, they should explain the special circumstances that require a deviation from the Code of Ethics.
        \item The authors should make sure to preserve anonymity (e.g., if there is a special consideration due to laws or regulations in their jurisdiction).
    \end{itemize}

\item {\bf Broader impacts}
    \item[] Question: Does the paper discuss both potential positive societal impacts and negative societal impacts of the work performed?
    \item[] Answer: \answerYes{} \item[] Justification: Broader impacts are discussed in the Discussion section at the end of the paper.
    \item[] Guidelines:
    \begin{itemize}
        \item The answer NA means that there is no societal impact of the work performed.
        \item If the authors answer NA or No, they should explain why their work has no societal impact or why the paper does not address societal impact.
        \item Examples of negative societal impacts include potential malicious or unintended uses (e.g., disinformation, generating fake profiles, surveillance), fairness considerations (e.g., deployment of technologies that could make decisions that unfairly impact specific groups), privacy considerations, and security considerations.
        \item The conference expects that many papers will be foundational research and not tied to particular applications, let alone deployments. However, if there is a direct path to any negative applications, the authors should point it out. For example, it is legitimate to point out that an improvement in the quality of generative models could be used to generate deepfakes for disinformation. On the other hand, it is not needed to point out that a generic algorithm for optimizing neural networks could enable people to train models that generate Deepfakes faster.
        \item The authors should consider possible harms that could arise when the technology is being used as intended and functioning correctly, harms that could arise when the technology is being used as intended but gives incorrect results, and harms following from (intentional or unintentional) misuse of the technology.
        \item If there are negative societal impacts, the authors could also discuss possible mitigation strategies (e.g., gated release of models, providing defenses in addition to attacks, mechanisms for monitoring misuse, mechanisms to monitor how a system learns from feedback over time, improving the efficiency and accessibility of ML).
    \end{itemize}
    
\item {\bf Safeguards}
    \item[] Question: Does the paper describe safeguards that have been put in place for responsible release of data or models that have a high risk for misuse (e.g., pretrained language models, image generators, or scraped datasets)?
    \item[] Answer: \answerNA{}.
    \item[] Justification: Our research does not release any new data or models
    \item[] Guidelines:
    \begin{itemize}
        \item The answer NA means that the paper poses no such risks.
        \item Released models that have a high risk for misuse or dual-use should be released with necessary safeguards to allow for controlled use of the model, for example by requiring that users adhere to usage guidelines or restrictions to access the model or implementing safety filters. 
        \item Datasets that have been scraped from the Internet could pose safety risks. The authors should describe how they avoided releasing unsafe images.
        \item We recognize that providing effective safeguards is challenging, and many papers do not require this, but we encourage authors to take this into account and make a best faith effort.
    \end{itemize}

\item {\bf Licenses for existing assets}
    \item[] Question: Are the creators or original owners of assets (e.g., code, data, models), used in the paper, properly credited and are the license and terms of use explicitly mentioned and properly respected?
    \item[] Answer: \answerYes{}
    \item[] Justification: All assests in the paper are cited in the references section. 
    \item[] Guidelines:
    \begin{itemize}
        \item The answer NA means that the paper does not use existing assets.
        \item The authors should cite the original paper that produced the code package or dataset.
        \item The authors should state which version of the asset is used and, if possible, include a URL.
        \item The name of the license (e.g., CC-BY 4.0) should be included for each asset.
        \item For scraped data from a particular source (e.g., website), the copyright and terms of service of that source should be provided.
        \item If assets are released, the license, copyright information, and terms of use in the package should be provided. For popular datasets, \url{paperswithcode.com/datasets} has curated licenses for some datasets. Their licensing guide can help determine the license of a dataset.
        \item For existing datasets that are re-packaged, both the original license and the license of the derived asset (if it has changed) should be provided.
        \item If this information is not available online, the authors are encouraged to reach out to the asset's creators.
    \end{itemize}

\item {\bf New assets}
    \item[] Question: Are new assets introduced in the paper well documented and is the documentation provided alongside the assets?
    \item[] Answer: \answerYes{}
    \item[] Justification: The new method introduced is detailed in Section \ref{sec:method} and is implemented in our repository with instructions for use. 
    \item[] Guidelines:
    \begin{itemize}
        \item The answer NA means that the paper does not release new assets.
        \item Researchers should communicate the details of the dataset/code/model as part of their submissions via structured templates. This includes details about training, license, limitations, etc. 
        \item The paper should discuss whether and how consent was obtained from people whose asset is used.
        \item At submission time, remember to anonymize your assets (if applicable). You can either create an anonymized URL or include an anonymized zip file.
    \end{itemize}

\item {\bf Crowdsourcing and research with human subjects}
    \item[] Question: For crowdsourcing experiments and research with human subjects, does the paper include the full text of instructions given to participants and screenshots, if applicable, as well as details about compensation (if any)? 
    \item[] Answer: \answerYes{} \item[] Justification: Instructions given to clinicians for rating concepts are provided in the Appendix.
    \item[] Guidelines:
    \begin{itemize}
        \item The answer NA means that the paper does not involve crowdsourcing nor research with human subjects.
        \item Including this information in the supplemental material is fine, but if the main contribution of the paper involves human subjects, then as much detail as possible should be included in the main paper. 
        \item According to the NeurIPS Code of Ethics, workers involved in data collection, curation, or other labor should be paid at least the minimum wage in the country of the data collector. 
    \end{itemize}

\item {\bf Institutional review board (IRB) approvals or equivalent for research with human subjects}
    \item[] Question: Does the paper describe potential risks incurred by study participants, whether such risks were disclosed to the subjects, and whether Institutional Review Board (IRB) approvals (or an equivalent approval/review based on the requirements of your country or institution) were obtained?
    \item[] Answer: \answerYes{} \item[] Justification: IRB approval was obtained for analyzing real-world clinical notes from the hospital, as mentioned in the Appendix.
    \item[] Guidelines:
    \begin{itemize}
        \item The answer NA means that the paper does not involve crowdsourcing nor research with human subjects.
        \item Depending on the country in which research is conducted, IRB approval (or equivalent) may be required for any human subjects research. If you obtained IRB approval, you should clearly state this in the paper. 
        \item We recognize that the procedures for this may vary significantly between institutions and locations, and we expect authors to adhere to the NeurIPS Code of Ethics and the guidelines for their institution. 
        \item For initial submissions, do not include any information that would break anonymity (if applicable), such as the institution conducting the review.
    \end{itemize}

\item {\bf Declaration of LLM usage}
    \item[] Question: Does the paper describe the usage of LLMs if it is an important, original, or non-standard component of the core methods in this research? Note that if the LLM is used only for writing, editing, or formatting purposes and does not impact the core methodology, scientific rigorousness, or originality of the research, declaration is not required.
\item[] Answer: \answerYes{}
    \item[] Justification: Our method detailed in Section \ref{sec:method} uses LLMs to propose and refine concepts for CBMs.  
    \item[] Guidelines:
    \begin{itemize}
        \item The answer NA means that the core method development in this research does not involve LLMs as any important, original, or non-standard components.
        \item Please refer to our LLM policy (\url{https://neurips.cc/Conferences/2025/LLM}) for what should or should not be described.
    \end{itemize}

\end{enumerate}

\appendix
\section*{Appendix}

\label{appendix}
\section{Multiple-Try Metropolis-Hastings}
\label{sec:MTMH}

In this section, we provide intuition for the Multiple-Try Metropolis-Hastings partial posterior method (Algorithm \ref{alg:MH_multi}).
To draw connections with the original Multiple-Try Metropolis-Hastings method \citep{liu2000multiple}, we first work in a completely abstract setting.

Given a state space $\Omega$, stationary distribution $\pi$, and proposal transition kernel $T(y;x)$, define weights $w(x,y) = \pi(x)T(y;x)$.
Note that under regular Metropolis-Hastings, the acceptance ratio for proposing the state $y$ given a state $x$ is $\min\lbrace w(y,x)/w(x,y),1\rbrace$.

\paragraph{Classical version.}
We briefly describe \cite{liu2000multiple}'s original method.
Suppose the current state is $x$.
\begin{enumerate}
    \item Draw $z_1,z_2,\ldots,z_M \sim T(Z;x)$.
    \item Sample $y$ from a distribution with probabilities
\begin{equation}
    \P\lbrace Y = z_j\rbrace = \frac{w(z_j,x)}{\sum_{i=1}^M w(z_i,x)}.
\end{equation}
Let $\check m$ denote the sampled index.
\item Sample $z_1^*,\ldots,z_{M-1}^* \sim T(Z;y)$, and set $z_{M}^* = x$.
\item Accept $y$ with probability
\begin{equation}
    \beta(x,y;\bz,\bz^*) = \min\left\lbrace\frac{\sum_{i=1}^M w(z_i,x)}{\sum_{i=1}^M w(z_i^*,y)}, 1\right\rbrace.
\end{equation}
\end{enumerate}

One can show that this Markov chain satisfies the detailed balance equations when marginalizing out the intermediate states $z_1,z_2,\ldots,z_M,z_1^*,\ldots,z_{M}$ (see Theorem 1 in \cite{liu2000multiple}).
We would like, however, to minimize the number of proposals necessary, so it is natural to ask whether the algorithm can be modified such that the proposals needed for the backward transitions, $z_1^*,\ldots,z_{M-1}^*$, can be omitted.
This is indeed possible.

\paragraph{Modified version.}
\begin{enumerate}
    \item Draw $z_1,z_2,\ldots,z_M \sim T(Z;x)$.
    \item Sample $y$ from a distribution with probabilities
\begin{equation}
    \label{eq:modified_multi_try_weights}
    \P\lbrace Y = z_j\rbrace = \frac{w(z_j,x)}{\sum_{i=1}^M w(z_i,x)}.
\end{equation}
Let $\check m$ denote the sampled index, $\bz = (z_1,z_2,\ldots,z_M)$, $\bz^* = (z_1,\ldots,z_{\check m - 1}, x, z_{\check m + 1},\ldots,z_M)$, and set $q(y;\bz) = \P\lbrace Y = y| z_1,\ldots,z_M\rbrace$.
\item Accept $y$ with probability
\begin{equation}
    \begin{split}
        \alpha(x,y;\bz_{-\check m}) & = \min\left\lbrace\frac{p( y \rightarrow \bz^* \rightarrow x)}{p( x \rightarrow \bz \rightarrow y)}, 1 \right\rbrace \\
        & = \min \left\lbrace\frac{\pi(y)\prod_{i=1}^M T(z_i;y)q(x;\bz^*)}{\pi(x)\prod_{i=1}^M T(z_i^*;x)q(y;\bz)} , 1 \right\rbrace.
    \end{split}
\end{equation}
\end{enumerate}

\begin{proposition}
    The modified Multiple-Try MH sampler is reversible.
\end{proposition}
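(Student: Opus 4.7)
The plan is to verify detailed balance $\pi(x)P(x, dy) = \pi(y)P(y, dx)$ for the marginal chain on $X$, where $P$ is the one-step transition kernel of the modified sampler. The key trick is to couple the forward and backward paths by identifying the auxiliary variable $\bz_{-\check m}$ across the two directions, so that what would have been two independent sets of backward draws in the classical version become a single shared vector---this is precisely the substitution $\bz^* = (z_1, \ldots, z_{\check m - 1}, x, z_{\check m + 1}, \ldots, z_M)$ built into the algorithm.

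First I would write the transition density for $y \neq x$ by conditioning on the chosen index $\check m$ and the remaining auxiliary variables. By symmetry in $\check m$, it suffices to fix $\check m = 1$ and multiply by $M$, giving
\begin{equation*}
P(x, dy) = M \int T(y;x) \prod_{i \neq \check m} T(z_i; x) \cdot q(y;\bz) \cdot \alpha(x, y; \bz_{-\check m}) \, d\bz_{-\check m} \, dy,
\end{equation*}
with the analogous expression for $P(y, dx)$ using $T(\cdot; y)$ in place of $T(\cdot; x)$ for the auxiliary measure. Using the weight identity $\pi(x)T(y;x) = w(x,y)$ and its counterpart $\pi(y)T(x;y) = w(y,x)$, the outer factors line up symmetrically, reducing detailed balance to an integral identity in $\bz_{-\check m}$ that it suffices to establish pointwise. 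After substituting $q(y;\bz) = w(y,x)/\sum_i w(z_i, x)$ and $q(x;\bz^*) = w(x,y)/\sum_i w(z_i^*, y)$, the pointwise equation reduces to
\begin{equation*}
\frac{\alpha(x,y;\bz_{-\check m})}{\alpha(y,x;\bz_{-\check m})} = \frac{\prod_{i \neq \check m} T(z_i;y)}{\prod_{i \neq \check m} T(z_i;x)} \cdot \frac{\sum_i w(z_i, x)}{\sum_i w(z_i^*, y)},
\end{equation*}
which, once the cancelled factors $\pi(x)\pi(y)T(x;y)T(y;x)$ are restored, matches the paths ratio $p(y \to \bz^* \to x)/p(x \to \bz \to y)$ defined in the algorithm. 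The standard construction $\alpha = \min\{R, 1\}$ then satisfies this identity by the usual Metropolis--Hastings argument, yielding reversibility.

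The main obstacle will be the bookkeeping of auxiliary variables across the two directions. In the forward direction, the $\bz_{-\check m}$ are drawn from $T(\cdot;x)$, while in the coupled backward direction they must be viewed as drawn from $T(\cdot;y)$. The ratio $\prod_{i \neq \check m} T(z_i;y)/\prod_{i \neq \check m} T(z_i;x)$ in the acceptance probability precisely corrects for this change of measure; it replaces the contribution that would have come from $M-1$ freshly drawn backward proposals in the classical Liu et al.\ version. Once this correction is carefully tracked through the pointwise equation, the remainder of the argument is standard.
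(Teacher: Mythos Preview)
Your proposal is correct and follows essentially the same route as the paper. Both arguments fix $\check m$ by symmetry (multiplying by $M$), write the transition kernel as an integral over the remaining auxiliary draws $\bz_{-\check m}$, and verify detailed balance pointwise in $\bz_{-\check m}$ using the $\min\{R,1\}$ structure of $\alpha$. The paper compresses the pointwise step by pulling the factor $\pi(x)T(y;x)\prod_{i\ge 2}T(z_i;x)q(y;\bz)$ inside the $\min$ to obtain a single expression symmetric in $(x,y)$ and $(\bz,\bz^*)$, whereas you arrive at the same conclusion via the ratio identity $\alpha(x,y)/\alpha(y,x)=R$; your explicit tracking of the change-of-measure factor $\prod_{i\neq\check m}T(z_i;y)/T(z_i;x)$ is exactly what the paper's symmetric $\min$ encodes.
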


\begin{proof}
    Let $\Phi(y;x)$ denote the actual transition matrix.
    We want to show
    \begin{equation}
        \pi(x)\Phi(y;x) = \pi(y)\Phi(x;y).
    \end{equation}
    To see this, we compute
    \begin{equation}
    \begin{split}
        \pi(x)\Phi(x,y) & = k\pi(x)T(y;x)\int \prod_{i=2}^M T(z_i;x)q(y;\bz)\alpha(x,y;\bz_{-1})d\bz_{-1} \\
        & = k \int \min\left\lbrace\pi(y)T(x;y)\prod_{i=2}^M T(z_i;y)q(x;\bz^*), \pi(x)T(y;x)\prod_{i=2}^M T(z_i;x)q(y;\bz)\right\rbrace d\bz_{-1} \\
        & = \pi(y)\Phi(y,x).\qedhere
    \end{split}
    \end{equation}
\end{proof}

\paragraph{Connections between modified and classical versions.}
The acceptance probability in the modified version of Multiple-Try MH can be expanded as follows:
\begin{equation}
\label{eq:multiple_try_modified_abstract_formula}
\begin{split}
    \min \left\lbrace\frac{\pi(y)\prod_{i=1}^M T(z_i;y)q(x;\bz^*)}{\pi(x)\prod_{i=1}^M T(z_i^*;x)q(y;\bz)} , 1 \right\rbrace & = \min \left\lbrace\frac{\pi(y)\prod_{i=1}^M T(z_i;y)\frac{w(x,y)}{\sum_{i=1}^M w(z_i^*,y)}}{\pi(x)\prod_{i=1}^M T(z_i^*;x)\frac{w(y,x)}{\sum_{i=1}^M w(z_i,x)}} , 1 \right\rbrace \\
    & = \min\left\lbrace \frac{\sum_{i\neq \check m} w(z_i,x) + w(y,x)}{\sum_{i \neq \check m} w(z_i,y) + w(x,y)} \cdot \prod_{i=2}^M \frac{T(z_i;y)}{T(z_i;x)}, 1 \right\rbrace.
\end{split}
\end{equation}
If $T(-,-)$ is invariant in the first argument (as in our application to BC-LLM), then this last formula is exactly equal to $\beta(x,y;\bz,\bz^*)$, the acceptance probability for the classical version, but if we were to use the same points for both the non-realized forward and backward proposals.

\paragraph{Equivalence with \textsc{Multi-SS-MH-Update}.}

Finally, we show that the modified version of Multiple-Try MH described in the previous sections is equivalent to what is implemented in \textsc{Multi-SS-MH-Update}, for a fixed $S$.
To see this, we make the replacements:
\begin{itemize}
    \item $x \gets \concepts$
    \item $z_i \gets (\check c_k^{(i)}, \concepts_{-k})$ for $i=1,\ldots,M$
    \item $y \gets (\check c_k^{(\check m)},\concepts_{-k})$
    \item $T(-;-) \gets Q(-;-,\by_S,\bX)$
    \item $\pi(-) \gets p(-|\by,\bX)$
\end{itemize}

Plugging these into \eqref{eq:multiple_try_modified_abstract_formula}, the acceptance ratio becomes
\begin{equation}
\begin{split}
    & \min\left\lbrace \frac{Q(c_k;\concepts_{-k},\by_S,\bX)\sum_{m=1}^M p((\check c^{(m)},\concepts_{-k})|\by,\bX)}{Q(\check{c}_k^{(\check m)};\concepts_{-k},\by_S,\bX)\sum_{m=1}^M p((\check c^{(m)},\concepts_{-k})|\by,\bX)}, 1 \right\rbrace \\
    =~ & \min\left\lbrace \frac{Q(c_k;\concepts_{-k},\by_S,\bX)\sum_{m=1}^M p(\by_{S^c}|\by_S,(\check c_k^{(m)},\concepts_{-k}),\bX)Q(\check c_k^{(m)};\concepts_{-k},\by_S,\bX)}{Q(\check c_k^{(\check m)};\concepts_{-k},\by_S,\bX)\sum_{0 \leq m \leq M, m\neq \check m} p(\by_{S^c}|\by_S,(\check c_k^{(m)},\concepts_{-k}),\bX)Q(\check c_k^{(m)};\concepts_{-k},\by_S,\bX)}, 1 \right\rbrace,
\end{split}
\end{equation}
where the equality comes from \eqref{eq:posterior_equivalence} and we set $\check c_k^{(0)} = c_k$ for convenience of notation.
Observe that this is exactly the formula in Line 6 of Algorithm \ref{alg:MH_multi}.

Meanwhile, plugging into \eqref{eq:modified_multi_try_weights} and using \eqref{eq:posterior_equivalence}, we see that the sampling weights used in the modified version of Multiple-Try MH are equivalent to those in Line 3 of Algorithm \ref{alg:MH_multi}.
This completes the proof of the equivalence.

\section{Implementation details for \method}
\label{appendix:implementation}

Here we discuss how the hyperparameters for \method{} should be selected:
\begin{itemize}
    \item \textbf{Fraction $\omega$ of data used for partial posterior}: Choosing a small $\omega$ may lead to the LLM proposing less relevant concepts, but tends to lead to more diverse proposals. In contrast, a large $\omega$ tends to lead to less diverse proposals because the LLM is encouraged to propose concepts that are relevant to the dataset $\mathcal{D}$, which may not necessarily generalize. In experiments, we found that $\omega = 0.5$ provided good results.
    \item \textbf{Number of candidate concepts $M$}: More candidate proposals per iteration can allow for more efficient exploration of concepts. The number of candidates is limited by the number of concepts that the LLM can reliably extract in a single batch. There tends to also be diminishing returns, as the first few candidates generated by the LLM based on the top keyphrases tend to be highest quality and most relevant. We found that setting $M = 10$ provided good performance.
    \item \textbf{Warm-start and Burn-in}: Since Gibbs sampling can be slow to converge, we precede it with a warm-start, which we obtain by updating concepts greedily. That is, we select the concept that maximizes $\operatorname{argmax} p(\gamma|\concepts_{-k},\by_S,\bX)$, instead of sampling from the distribution. In experiments, we run warm-start for one epoch and stored the last 20 iterates as posterior samples; the rest of the samples were treated as burn-in.
    \item \textbf{Number of iterations $T$}: Although Monte Carlo procedures for Bayesian inference typically have thousands of samples, this is cost-prohibitive when an LLM must query each observation for a new concept per iteration. In experiments, we found that even setting $T$ as low as $4$ to be quite effective. Generally speaking, if the goal is solely prediction accuracy, a small $T$ may suffice. On the other hand, if the goal is uncertainty quantification for the relevant concepts, one may prefer $T$ on the higher end (e.g. 10) for more complete exploration of concepts.
\end{itemize}

\section{Example prompts}

Here we provide example prompts that one may use with BC-LLM.
The prompt should vary with how much prior knowledge one has about the prediction target as well as how much information one would like to reveal to the LLM about the prediction target.

\subsection{Example prompt for keyphrase extraction (Step 0)}
\label{appendix:word_cloud}

Example prompt for extracting keyphrases from images, where we reveal limited information to the LLM about the prediction target:
\begin{quote}
	\texttt{
    Given an image, your task is to brainstorm a set of descriptors that will help classify the image to its corresponding label Y. For the provided image, list as many descriptors about it. For each descriptor, also list as many descriptors that mean the same thing or generalizations of the descriptor. All descriptors, synonyms, and generalizations cannot be more than two words. Output characteristics that this image possess, such as "round wings" or "red head." Do not output general categories of descriptors like "wing color" or "head shape." Output at least 10 keyphrases.
}
\end{quote}

Example prompt for extracting keyphrases from clinical notes, where we do not reveal information about the prediction target but we provide ideas on patient characteristics that one should extract:
\begin{quote}
	\texttt{
Here is a clinical note:
$\langle$ note $\rangle$
\\
Output a list of descriptors that summarizes the patient case (such as aspects on demographics, diagnoses, social determinants of health, etc). For each descriptor, also list as many descriptors that mean the same thing or generalizations of the descriptor.
All descriptors, synonyms, and generalizations cannot be more than two words. Output as a JSON in the following format...
}
\end{quote}

\subsection{Example prompt for proposing concepts (Step 2)}
\label{appendix:propose}

Example prompt for having the LLM brainstorm candidate concepts based on top keyphrases in the keyphrase model, where we do not reveal information about the prediction target and only reveal that the dataset contains patient notes:
\begin{quote}
	\texttt{
The goal is to come up with a concept bottleneck model (CBM) that only extracts 3 meta-concepts from patient notes to predict some outcome Y with maximum accuracy. A meta-concept is a binary feature extractor defined by a yes/no question. We have 2 meta-concepts so far:
\begin{enumerate}
    \item ...
    \item ...
\end{enumerate}
To come up with the 3rd meta-concept, I have done the following: I first fit a CBM on the 2 existing meta-concepts. Then to figure out how to improve this 3-concept CBM, I first asked an LLM to extract a list of concepts that are present in each note, and then fit a linear regression model on the extracted concepts to predict the residuals of the 3-concept CBM. These are the top extracted concepts in the resulting residual model, in descending order of importance:
\\
$\langle$ top keyphrases from keyphrase model $\rangle$
\\
Given the residual model, create cohesive candidates for the 3rd meta-concept. Be systematic and consider all the listed concepts in the residual model. Start from the most to the least predictive concept. For each concept, check if it matches an existing meta-concept or create a new candidate meta-concept. Work down the list, iterating through each concept. Clearly state each candidate meta-concept as a yes/no question.
\\
Suggestions for generating candidate meta-concepts: Do not propose meta-concepts that are simply a union of two different concepts (e.g. ``Does the note mention this patient experiencing stomach pain or being female?" is not allowed), questions with answers that are almost always a yes (e.g. the answer to ``Does the note mention this patient being sick?" is almost always yes), or questions where the yes/no options are not clearly defined (e.g. ``Does the note mention this patient experiencing difficulty?" is not clearly defined because difficulty may mean financial difficulty, physical difficulties, etc). Do not propose meta-concepts where you would expect over 95\% agreement or disagreement with the 4 existing meta-concepts (e.g. ``Does the note mention the patient having high blood pressure?" overlaps too much with ``Does the note mention the patient having hypertension?").
\\
Finally, summarize all the generated candidate concepts in a JSON.}
\end{quote}

\subsection{Example prompt for annotations for candidate concepts (Step 3)}
\label{appendix:complete_annotation}

A simple concept extraction procedure is to obtain binary or categorical concept extractions.
We use prompts like this:
\begin{quote}
\texttt{You will be given a clinical note. I will give you a series of questions. Your task is answer each question with 1 for yes or 0 for no. If the answer to the question is not clearly yes or no, you may answer with the probability that the answer is a yes. Respond with a JSON that includes your answer to all of the questions. Questions:
\begin{enumerate}
    \item Does the note mention the patient having social support?
    \item Does the note mention the patient having stable blood pressure?
    \item Does the note mention the patient not experiencing respiratory distress?
    \item Does the note mention the patient experiencing substance abuse or dependence?
    \item Does the note mention the patient being uninsured?
\end{enumerate}
clinical note:
$\langle$ note $\rangle$
}
\end{quote}

Note that in cases where the LLM was unsure about the concept's value, it was allowed to return a probability, which prior works have found to be helpful \citep{Kim2023-vu}.
In general, the LLM outputted 1's and 0's.
Probabilities were outputted occasionally, which as highlighted in the main manuscript, can be audited and checked by a human.

\section{Additional Results}
\label{appendix:exp_results}

\subsection{MIMIC}
To assess the robustness of \method{}, we include the following variations.
First, we test the robustness of \method{} to prompt phrasing. In particular, we feed in a prompt that is more vague than that used in the main manuscript during Step 0 of \method{} by replacing the original sentence ``\texttt{Output a list of descriptors that summarizes the patient case (such as aspects on demographics, diagnoses, SDOH, etc).}`` with ``\texttt{Output a list of descriptors that summarizes the patient case.}``
Second, we test the robustness of the results to the choice of LLM by rerunning the same experiment using Cohere's Command-R LLM, rather than GPT-4o-mini.

Results are shown in Figure \ref{fig:mimic_cohere}.
First, we see that \method{} is quite robust to prompt phrasing, as the results with this simpler prompt are similar to the more detailed prompt.
Second, we see that performance of all the methods---\method{} and the comparator methods---are lower using Command-R. Nevertheless, we see that the relative performance between the methods is the same, with \method{} performing the best.
\red{UPDATE FIGURE -- BC-LLM should also be labelled with cohere vs prompt ablation}

\begin{figure}[h]
    \centering
    \includegraphics[width=\linewidth]{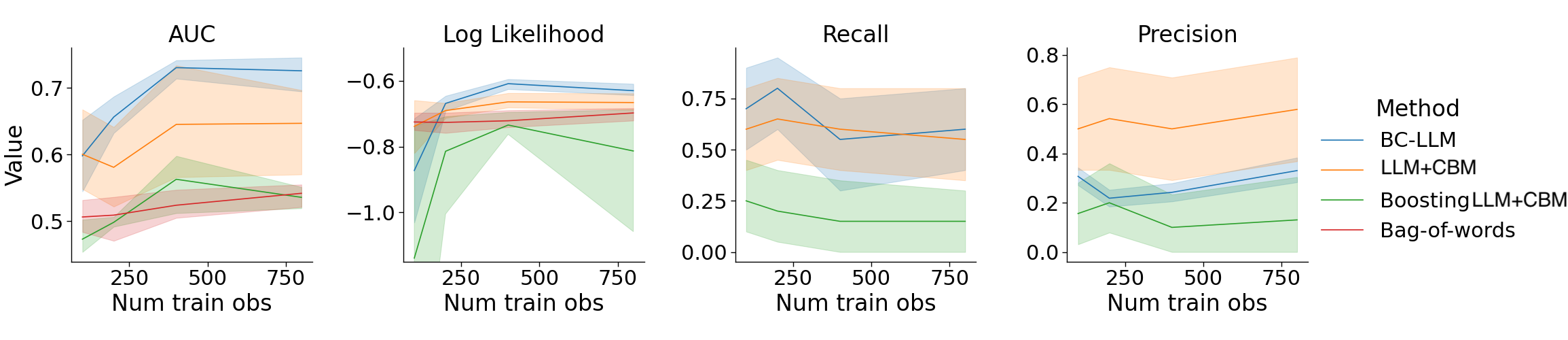}
    \caption{
    Additional results running \method{} and comparator methods on the MIMIC dataset, evaluated in terms of performance and recovery of true concepts. Error bars indicate the 95\% CI.
    }
    \label{fig:mimic_cohere}
\end{figure}

\subsection{CUB-Birds dataset}

Table \ref{table:birds_anthropic} reports results from two additional sets of experiments on the CUB-Birds dataset.
First, to evaluate robustness of \method{} to the choice of LLM, we report the performance of \method{} and comparator methods when Anthropic's Claude 3.5 Haiku is used instead, with all other settings the same as that discussed in Section \ref{sec:birds}.
Second, to facilitate comparisons to other CBM results on the CUB-Birds dataset, we evaluate \method{} on the dataset's original task of classifying all 200 bird species.
Nevertheless, we emphasize that this is not the intended use case of \method{}, as its aim is to learn fully interpretable CBM methods with $K \le 20$.
Due to computation time, we run BC-LLM for one epoch rather than five.
\red{technically it is half an epoch...}

\begin{table}[h]
\centering
\begin{tabular}{l|lll}
\toprule
Method & Accuracy ($\uparrow$) & AUC ($\uparrow$) & Brier ($\downarrow$) \\
\midrule
\multicolumn{4}{c}{\textit{Using Claude 3.5 Haiku}}\\
BC-LLM &  0.665 \err{0.593, 0.736} &  0.849 \err{0.811, 0.888}  & 0.443 \err{0.367, 0.519}\\
LLM+CBM & 0.583 \err{0.517, 0.648} & 0.771 \err{0.722, 0.821} & 0.507 \err{0.434, 0.580} \\
Boosting LLM+CBM & 0.640 \err{0.568, 0.712} & 0.811 \err{0.769, 0.852} & 0.552 \err{0.455, 0.649} \\
\midrule
\multicolumn{4}{c}{\textit{Classifying between all 200 bird species}}\\
BC-LLM & 0.562 \err{0.487, 0.644} & & 0.598 (0.523, 0.665)\\
LLM+CBM & \\
Boosting LLM+CBM &0.005 (0.003, 0.007) & 0.500 (0.500, 0.500) & 1.012 (1.011, 1.012)\\
\bottomrule
\end{tabular}
\caption{Additional results for the CUB-Birds dataset}
\label{table:birds_anthropic}
\end{table}

\subsection{Other image datasets}

To evaluate BC-LLM on other real-world image datasets, Table~\ref{table:more_images} includes two additional imaging datasets:
\begin{itemize}
    \item \textit{Functional Map of the World (fMoW)} \citep{Christie2018-dm}: RGB satellite images classified into 62 building/land use categories. Models were trained on satellite images from the US, with 100 images sampled from each category. CBMs were trained to have 60 concepts. Models were evaluated in terms of in-distribution performance (i.e. images from the USA) as well as out-of-distribution (OOD) performance when evaluated on satellite images from China.
    \item \textit{Imagenette} \citep{imagenette-xz}: 10 classes from ImageNet. CBMs were trained to have 10 concepts.
\end{itemize}

\begin{table}[h]
\centering
\begin{tabular}{l|lll}
\toprule
Method & Accuracy ($\uparrow$) & AUC ($\uparrow$) & Brier ($\downarrow$) \\
\midrule
\multicolumn{4}{c}{\textit{fMoW: USA images}}\\
BC-LLM & 0.357 (0.340, 0.375) & 0.904 (0.898, 0.910) & 0.780 (0.766, 0.792)\\
LLM+CBM & 0.311 (0.295, 0.327) & 0.878 (0.871, 0.885) & 0.892 (0.872, 0.914)\\
Boosting LLM+CBM & 0.118 (0.105, 0.130) & 0.757 (0.748, 0.764) & 1.029 (1.018, 1.039) \\
\midrule
\multicolumn{4}{c}{\textit{fMoW: OOD-China images}}\\
BC-LLM & 0.265 (0.246, 0.285) & \red{???} & 0.853 (0.839, 0.868)\\
LLM+CBM & 0.223 (0.205, 0.242) & \red{???} & 1.023 (1.00, 1.045) \\
Boosting LLM+CBM & 0.084 (0.072, 0.098) & \red{??} & 1.065 (1.054, 1.077)\\
\midrule
\multicolumn{4}{c}{\textit{Imagenette}}\\
BC-LLM & 0.987 (0.980, 0.993) & 0.999 (0.997, 1.000) & 0.072 (0.062, 0.085)\\
LLM+CBM & 0.902 (0.883, 0.919) & 0.988 (0.986, 0.990) & 0.125 (0.107, 0.142)\\
Boosting LLM+CBM & 0.639 (0.610, 0.670) & 0.899 (0.888, 0.909) & 0.448 (0.422, 0.479)\\
\bottomrule
\end{tabular}
\caption{Results on additional imaging datasets}
\label{table:more_images}
\end{table}

\section{Experiment details}
\label{appendix:exp_details}

\subsection{Evaluating correct concept matches}
\label{sec:concept_match_eval}
Quantifying a ``correct'' concept match requires going beyond exact string matching, as there are many possible refinements for a single concept.
For instance, for the concept ``Does the note mention the patient smoke at present?'', there are rephrasings (``Does this note mention if the patient currently smokes?''), polar opposites (``Does the note not mention the currently smoking?''), or closely overlapping concepts (``Does the note mention the patient's smoking history?'').
To quantify if a learned concept was correct, we used the following rule: if the absolute value of the Pearson correlation for LLM annotations of two concepts exceeds 50\%, the two concepts are sufficiently similar.
We manually confirmed that any pair of learned and true concepts that passed this rule were always closely related.

\subsection{MIMIC}
The entire dataset consists of 7043 observations of which we trained on 100, 200, 400, and 800 randomly selected observations and evaluated 500 held-out observations. We used the ``Chief Complaint'' and ``Social History'' sections of the notes.
The label $Y$ was generated per a LR model, in which
\begin{align*}
     \log{\frac{\Pr(Y=1|X)}{\Pr(Y=0|X)}} = &4 * \mathbbm{1}\{\text{Does the note imply the patient is unemployed?}\}\\
     & + 4 * \mathbbm{1}\{\text{Does the note imply the patient is retired?}\}\\
     & + 4 * \mathbbm{1}\{\text{Does the note mention the patient consuming alcohol in the present or the past?}\}\\
     & - 4 * \mathbbm{1}\{\text{Does the note mention the patient smoking in the present or the past?}\}\\
     & + 5 * \mathbbm{1}\{\text{Does the note mention the patient using recre- ational drugs in the present or the past?}\}
\end{align*}

\texttt{LLM+CBM summarization} and \method{} were run to fit CBMs with $K=6$ concepts.
\method{} was run for $T=5$ epochs.
\texttt{Boosting LLM+CBM} ran for 10 iterations, in which at most 1 new candidate was added each iteration.

\subsection{CUB-Birds dataset}
To train a bird classifier for $R$ subtypes, \texttt{LLM+CBM summarization} and \method{} were run to fit CBMs with $K = \min(10, \max(4, R))$ concepts.
\method{} was run for a maximum of $T=5$ epochs or 25 iterations, whichever was reached first.
\texttt{Boosting LLM+CBM} ran for 10 iterations, in which at most 1 new candidate was added each iteration.
For the black-box comparator, we used a ResNet50 model pre-trained on ImageNetV2, which ensures ResNet's training set does not overlap with the CUB-Birds dataset.

\subsection{Clinical notes and tabular data from the Zuckerberg San Francisco General Hospital}
Data in this experiment contained PHI, for which IRB approval was obtained.
To predict readmission risk, the models were trained to analyze the sections ``Brief history leading to hospitalization'' and the ``Summary of hospitalization'' in the discharge summary for each patient.
\texttt{LLM+CBM summarization} and \method{} were run to fit CBMs with $K=4$ concepts.
\method{} was run for $T=5$ epochs.
\texttt{Boosting LLM+CBM} ran for 10 iterations, in which at most 1 new candidate was added each iteration.

\subsection{Clinician survey}

To conduct the survey for clinical relevance of features and concepts, the following instructions were given to clinicians:

\texttt{
Please rate the features learned by BC-LLM and its comparator methods. In the attached CSV, we have listed 25 or so features from various methods, where we mask which method has learned which feature. Please fill in the column "How clinically relevant is the candidate concept with determining a patient's readmission risk? Enter values = 1: low, 2:medium, 3: high." Please try to use the full range of scores, rather than assigning all the features the same score.
}

Table~\ref{tab:clinician} shows the table that clinicians were asked to fill in.
The ordering of features was randomly shuffled to obfuscate which method had learned which concepts.

\begin{table}[h]
\caption{Clinicians were asked to fill in the following table with their ratings of the clinical relevance of each  feature/concept for predicting a patient's readmission risk.}
\label{tab:clinician}
\centering

\begin{tabular}{| l | l|}
\hline
\textbf{Candidate concept} & \textbf{Score (1-3)} \\
\hline
Does the patient have multiple medications prescribed? & \\
\hline
Does the patient have a history of hypertension (HTN)? & \\
\hline
Is this patient experiencing chest pain?&  \\
\hline
What is the patient's weight? & \\
\hline
Does the patient have heart failure with reduced ejection fraction? & \\
\hline
What are the patient's lab results for BNP?&  \\
\hline
Does the patient have substance dependence?&  \\
\hline
Is the patient stable at discharge?&  \\
\hline
Does the patient have a history of falls?&  \\
\hline
How many emergency department encounters does this patient have? & \\
\hline
 What are the patient's lab results for Lactate Dehydrogenase? & \\
\hline
Does the patient have uncontrolled diabetes? & \\
\hline
Does the patient have a history of frequent hospital admissions? & \\
\hline
How many encounters does this patient have? & \\
\hline
Does the patient have a history of diabetes mellitus type 2 (DM2)? & \\
\hline
Does the patient have complex medical history? & \\
\hline
Does the patient have a history of sepsis? & \\
\hline
What are the patient's lab results for Creatinine? & \\
\hline
Does the patient have a history of drug or substance use disorder? & \\
\hline
What is the patient's value for Expiratory Positive Airway Pressure (EPAP)? & \\
\hline
Does the patient have had any missed outpatient appointments? & \\
\hline
What are the patient's lab results for Glucose? & \\
\hline
Are the patient's outpatient follow-up appointments scheduled? & \\
\hline
\end{tabular}

\end{table}

\section{Laplace Approximation of Split-Sample Posterior}

In this section, we describe how to perform Laplace approximation of the split-sample posterior $p(\by_{S^c}|\by_S, \concepts,\bX)$ when the likelihood model is logistic and the prior on the coefficient vector $\vec{\theta}$ is a standard normal $\mathcal{N}(0,\gamma^2I)$.
For simplicity, we omit discussing the constant term $\theta_0$.
Fixing the list of concepts $\concepts$, we let $\Phi$ denote the $\nsamples \times \nconcepts$ matrix whose $(i,j)$-th entry is given by $\Phi_{ij} = \phi_{c_j}(x_i)$.
Given any subset of example indices $T \subset [n]$, we can write the joint likelihood for these examples as
\begin{equation}
    p(\by_{T}|\vec{\theta},\concepts,\bX) = \exp\left(\sum_{i\in T} \left(y_i\vec{\theta}^T\Phi_{i\cdot} -\log(1 + \exp(\vec{\theta}^T\Phi_{i\cdot}))  \right)\right).
\end{equation}
Multiplying by the prior for $\vec{\theta}$, the joint conditional distribution for $\by_T$ and $\vec{\theta}$ is then:
\begin{equation}
    \label{eq:logistic_posterior}
    p(\vec{\theta},\by_{T}|\concepts,\bX) = (2\pi\gamma^2)^{-1/2}\exp\left(\sum_{i\in T} \left(y_i\vec{\theta}^T\Phi_{i\cdot} -\log(1 + \exp(\vec{\theta}^T\Phi_{i\cdot}))\right) - \frac{\|\vec{\theta}\|_2^2}{2\gamma^2} \right).
\end{equation}
To form the Laplace approximation to \eqref{eq:logistic_posterior}, we first denote
\begin{equation}
\begin{split}
    g_T(\vec{\theta}) & \coloneqq -\log p(\vec{\theta},\by_{S^c}|\concepts,\bX) \\
    & = \sum_{i\in T} \left(-y_i\vec{\theta}^T\Phi_{i\cdot} + \log(1 + \exp(\vec{\theta}^T\Phi_{i\cdot}))\right) + \frac{\|\vec{\theta}\|_2^2}{2\gamma^2},
\end{split}
\end{equation}
and its Hessian by
\begin{equation}
    \label{eq:Hessian}
    H_T(\vec{\theta}) \coloneqq \nabla^2 g_T(\vec{\theta}) = \sum_{i \in T} \frac{e^{\vec{\theta}^T\Phi_{i\cdot}}}{(1+ e^{\vec{\theta}^T\Phi_{i\cdot}})^2}\Phi_{i\cdot}\Phi_{i\cdot}^T  + \gamma^{-2}I.
\end{equation}

Let
\begin{equation}
    \vec{\theta}_{\operatorname{\operatorname{MAP}},T} \coloneqq \operatorname{argmin} g_T(\vec{\theta})
\end{equation}
be the maximum a posteriori (MAP) estimate.
We can then perform a quadratic approximation of the exponent around $\vec{\theta}_{\operatorname{MAP},T}$ to get
\begin{equation}
    \label{eq:Laplace_approx}
    p(\vec{\theta},\by_{T}|\concepts,\bX) \approx p(\vec{\theta}_{\operatorname{MAP},T},\by_T|\concepts,\bX)\exp\left(-\frac{1}{2}(\vec{\theta}-\vec{\theta}_{\operatorname{MAP},T})^TH_T(\vec{\theta}_{\operatorname{MAP},T})(\vec{\theta}-\vec{\theta}_{\operatorname{MAP},T})\right).
\end{equation}
Plugging in \eqref{eq:logistic_posterior} evaluated at $\vec{\theta}_{\operatorname{MAP},T}$ and integrating out with respect to $\vec{\theta}$ gives
\begin{equation}
\label{eq:Laplace_approx_marginal_likelihood}
\begin{split}
        p(\by_{T}|\concepts,\bX) & \approx \exp\left(\sum_{i \in T} \left(y_i\vec{\theta}_{\operatorname{MAP},T}^T\Phi_{i\cdot} -\log(1 + \exp(\vec{\theta}_{\operatorname{MAP},T}^T\Phi_{i\cdot}))\right) + \frac{\|\vec{\theta}_{\operatorname{MAP},T}\|_2^2}{2\gamma^2} \right) \\
        & \quad \cdot (2\pi)^{(\nconcepts-1)/2}\gamma^{-1}\det(H_T(\vec{\theta}_{\operatorname{MAP},T}))^{-1/2}.
\end{split}
\end{equation}
We finish by applying the above formulas with respect to $T = [n]$ and $T = S$ and taking ratios:
\begin{equation}
    \label{eq:Laplace_approximation}
    \begin{split}
        & p(\by_{S^c}|\by_S, \concepts,\bX) \\
        & = \frac{p(\by|\concepts,\bX)}{p(\by_{S}|\concepts,\bX)} \\
        & \approx \exp\left(\sum_{i=1}^n \left(y_i\vec{\theta}_{\operatorname{MAP}}^T\Phi_{i\cdot} -\log(1 + \exp(\vec{\theta}_{\operatorname{MAP}}^T\Phi_{i\cdot}))\right) - \sum_{i \in S} \left(y_i\vec{\theta}_{\operatorname{MAP},S}^T\Phi_{i\cdot} -\log(1 + \exp(\vec{\theta}_{\operatorname{MAP},S}^T\Phi_{i\cdot}))\right)\right) \\
        & \quad \cdot \exp\left(\frac{\|\vec{\theta}_{\operatorname{MAP},S}\|_2^2 - \|\vec{\theta}_{\operatorname{MAP}}\|_2^2}{2\gamma^2} \right)\left(\frac{\det(H_S(\vec{\theta}_{\operatorname{MAP},S}))}{\det(H(\vec{\theta}_{\operatorname{MAP}}))}\right)^{1/2}.
    \end{split}
\end{equation}

\begin{remark}
    The MAP estimate $\vec{\theta}_{\operatorname{MAP}}$ can be computed by a call to \texttt{scikit-learn}'s \texttt{LogisticRegression()} model, setting \texttt{C}~$=2\gamma^2$.
\end{remark}

\begin{remark}
    In the current implementation of our procedure, we use a slightly different method to perform Laplace approximation, which instead approximates the integral in \eqref{eq:conditional_lkhd_decomposition}.
\end{remark}

\section{Proof of Theorem \ref{thm:no_prior}}

Recall the definition
\begin{equation}
    L(\concepts) \coloneqq \max_{\vec{\theta}}\mathbb{E}_{(X,Y) \sim \nu}\lbrace \log p(Y|X,\vec{\theta},\concepts)\rbrace.
\end{equation}
This quantifies how relevant the concepts in $\concepts$ are to the prediction of the response $Y$.
Note that the definition does not require that the model class is correctly specified (i.e. there does not have to be some $\concepts$ and $\vec{\theta}$ so that $
p\left(Y = 1 | X=x, \model, \concepts \right) = 
\sigma\left(
\sum_{k=1}^{\nconcepts} \theta_k \phi_{c_k}(x)
+ \theta_0
\right)
$).
Before presenting the proof, we first detail the assumptions we make.

\begin{assumption}
\label{assumption_supplement}
\hspace{1cm}

\begin{enumerate}
    \item $Y \in \lbrace 0, 1\rbrace$ is a binary response;
    \item The data $\data$ is generated i.i.d. from some distribution $\nu$;
    \item The prior on $\vec{\theta}$ is $\mathcal{N}(0,\gamma^2I)$;
    \item $\mathcal{C}^* \coloneqq \operatorname{argmax}_{\concepts}L(\concepts)$ comprises all permutations of a single vector $\concepts^{~*}$;
    \item There exists some $\Delta > 0$ such that $L(\concepts^{~*}) - \operatorname{argmax}_{\concepts\notin\mathcal{C}^*} L(\concepts) \geq \Delta$;
    \item Let $\mathcal{C}_n \coloneqq \cup_{\concepts_{-k},\by,\bX}\lbrace c \colon Q(c;\concepts_{-k},\by,\bX) > 0 \rbrace$.
    Assume $\mathcal{C}_n$ is a finite set of size at most $\exp(n^{1-\epsilon})$ for some $0 < \epsilon < 1$;
    \item There exists $\eta > 0$ such that for each $\concepts_{-k}$, there exists $l \in \lbrace 1, 2, \ldots, \nconcepts\rbrace$ such that $\concept_l^* = \operatorname{argmax}_c L((c,\concepts_{-k}))$, $\concept_l^* \notin \concepts_{-k}$, and $Q(c^*_l;\concepts_{-k},\by,\bX) \geq \eta$ almost surely for all $\by,\bX$ with $n$ large enough;
    \item There exists some $B$ such that for any $\concepts \in \mathcal{C}_n^K$, all distinct, we have $\|\vec{\theta}_{\concepts}^*\|_2 \leq R$, where
    \begin{equation}
        \vec{\theta}_{\concepts}^* \coloneqq \operatorname{argmax}_{\vec{\theta}}\mathbb{E}_{(X,Y) \sim \nu}\lbrace \log p(Y|X,\vec{\theta},\concepts)\rbrace;
    \end{equation}
    \item Let $\Phi_{\concepts} = (\phi_{c_1}(x),\phi_{c_2}(x),\ldots,\phi_{c_K}(x))$ for $x \sim \nu$.
    There exists some $\lambda_{\min}$ such that for any $\concepts \in \mathcal{C}_{n,K}$, all eigenvalues of the second moment matrix $\mathbb{E}[\Phi_{\concepts}\Phi_{\concepts}^T]$ are bounded from below by $\lambda_{\min}$.
    Here, $\mathcal{C}_{n,K}$ denotes the $K$-fold product of $\mathcal{C}_n$ but with any vectors containing duplicate concepts removed;
\item We modify the algorithm so that the random subset $S$ can only take one of $\exp(n^{1-\epsilon})$ possibilities $\mathcal{S}_n \coloneqq \lbrace S_1,S_2,\ldots,S_{B,n} \rbrace$.
\end{enumerate}
\end{assumption}

Some of these assumptions can be relaxed and are made primarily for the sake of clarity of exposition.
First, while we have focused on binary $Y$, the results readily extends to multiclass and continuous $Y$ as well.
Second, the Gaussian prior assumption for $\vec{\theta}$ can be relaxed to one that simply assumes some smoothness and coverage properties.
Note that for simplicity, we avoid discussing the constant term $\theta_0$.

\begin{proof}[Proof of Theorem \ref{thm:no_prior}]
    We prove the theorem only for \textsc{SS-MH-Update} for simplicity, but it will be clear how to generalize it to \textsc{Multi-SS-MH-Update}.
    
    \textit{Step 1: Taking a subsequence.}
    Note that the Markov chain in question corresponds to the list $L$ returned in line 7 of Algorithm \ref{alg:MH_gibbs}.
    Denote this using $(\tilde{\concepts}^{~t})_{t=1}^\infty$.
    Because of the cyclic nature of the updates, this chain is not time invariant and is hence difficult to work with.
    We argue that it suffices to study the temporal projection $(\concepts^{~t})_{t=1}^\infty$, where $\concepts_t = \tilde{\concepts}_{t\nconcepts}$ for $t=0,1,2,\ldots$.
This is because every stationary distribution for $(\tilde{\concepts}^{~t})_{t=1}^\infty$ is also stationary for $(\concepts^{~t})_{t=1}^\infty$.

    \textit{Step 2: Reduction to connected components.}
Since $(\tilde{\concepts}^{~t})_{t=1}^\infty$ satisfies the detailed balance equations, it is a symmetric Markov chain, i.e. it has an equivalent representation as a flow on a network, in which the vertices are states and the edges correspond to nonzero transition probabilities.
    This property is inherited by $(\concepts^{~t})_{t=1}^\infty$.
    Consider a connected component $\mathcal{A}$ of the network.
    The Markov chain, when restricted to $\mathcal{A}$, is aperiodic, since there must exist a self-loop.
    Furthermore, it is finite according to Assumption \ref{assumption_supplement}(vi).
    As such, $(\concepts^{~t})_{t=1}^\infty$ has a unique stationary distribution $\pi_\mathcal{A}$ supported on $\mathcal{A}$ \citep{LevinPeresWilmer2006}. 
    Indeed, all stationary distributions of $(\concepts^{~t})_{t=1}^\infty$ are convex combinations of such distributions.
    Hence, it suffices to show the desired property for $\pi_\mathcal{A}$.

    \textit{Step 3: Concentration.}
    In order to analyze $\pi_\mathcal{A}$, we need quantitative control over the transition probabilities via concentration.
    For a given dataset $\data$ of size $\nsamples$, we condition on the event guaranteed by Proposition \ref{prop:concentration}.

    \textit{Step 4: Convergence.}
    We again focus on a single connected component $\mathcal{A}$ and the goal is to show that with high probability, $\pi_{\mathcal{A}}(\mathcal{C}^*\cap \mathcal{A}) \geq 1 - \delta$ for some $\delta \to 0$ uniformly.
    We may ignore the rest of the network.
Let $a = |\mathcal{A}|$ and $b = |\mathcal{C}^*\cap \mathcal{A}|$. 
    Since $\mathcal{A}$ itself is finite, we can enumerate its states starting with those in $\mathcal{C}^*\cap \mathcal{A}$.
    The stationary distribution can be written as a vector $v = (v_{1:b},v_{b+1:a})$, while the Markov chain itself can be represented as a transition matrix $P$, which we write in block form as
    \begin{equation}
        P = 
        \begin{bmatrix}
        P_{11} & P_{12} \\
        P_{21} & P_{22}
        \end{bmatrix}.
    \end{equation}
The stationarity equation gives us
    \begin{equation}
        v_{b+1:a}(I-P_{22}) = v_{1:b}P_{12},
    \end{equation}
    from which we obtain
    \begin{equation}
    \label{eq:stationary_distribution_bound}
        \|v_{b+1:a}\|_2 = \left\|v_{1:b}P_{12}(I-P_{22})^{-1} \right\|_2 \leq \frac{\|P_{12}\|_{op}}{1 - \|P_{22}\|_{op}},
    \end{equation}
    so long as the denominator on the right hand side is nonzero.
    To see this, notice that any state $j$ for $b+1 \leq j \leq a$ corresponds to a vector $\tilde\concepts^{~(0)} \notin \mathcal{C}^*$.
    However, using Assumption \ref{assumption_supplement}(vii), there is a sequence $(\tilde\concepts^{~(0)},\tilde\concepts^{~(1)},\ldots,\tilde\concepts^{~(\nconcepts-1)})$ such that for $k=1,\ldots,\nconcepts$, we have (i) $\tilde\concepts^{~(k)}_{-k} = \tilde\concepts^{~(k-1)}_{-k}$, (ii) $Q(\tilde c_k;\tilde\concepts^{~(k-1)}_{-k},\by_S,\bX) \geq \eta$, (iii) $L(\tilde\concepts^{~(k)}) \geq L(\tilde\concepts^{~(k-1)})$ with equality holding iff $\tilde\concepts^{~(k)} = \tilde\concepts^{~(k-1)}$.
    Combining (iii) with \eqref{eq:concentration}, we see that all acceptance ratios along this sequence are equal to 1, which means that the transition probability from state $j$ to states in $\mathcal{C}^*$ is at least 
    \begin{equation}
        \prod_{k=1}^\nconcepts Q(\tilde c_k;\tilde\concepts^{~(k-1)}_{-k},\by_S,\bX) \geq \eta^\nconcepts.
    \end{equation}
    The sum of entries in row $j-b$ in $P_{22}$ is the total probability of transition from state $j$ to other states within $\mathcal{A}\backslash\mathcal{C}^*$ and hence has value less than $1-\eta^\nconcepts$.
    This holds for any row.
    Applying Lemma \ref{lem:matrix_norm}, we thus get $\|P_{22}\|_{op} \leq 1 - \eta^K$, which allows us to bound the denominator in \eqref{eq:stationary_distribution_bound}.
    Next, every entry in $P_{12}$ is the probability of transition from a state in $\mathcal{C}^*$ to a state in $\mathcal{A}\backslash\mathcal{C}^*$.
    Using Step 3 and Assumption \ref{assumption_supplement}(v), the acceptance probability $\alpha$ of such a transition satisfies
    \begin{equation}
    \begin{split}
        \log \alpha & \leq  \max_{\concepts \in \mathcal{A}\backslash\mathcal{C}^*}\log p(\by_{S^c}|\by_S,\concepts,\bX) - \log p(\by_{S^c}|\by_S,\concepts^{~*},\bX) \\
        & \leq -\nsamples\Delta + O(n^{1-\epsilon'}).
    \end{split}
    \end{equation}
    We thus have
    \begin{equation}
        \|P_{12}\|_{op} \leq \sqrt{(a-b)b}\exp\left(- n\Delta(1 + O(n^{-\epsilon'})) \right),
    \end{equation}
    which allows us to bound the numerator in \eqref{eq:stationary_distribution_bound}.
    Combining this with the earlier bound on the denominator gives
    \begin{equation}
        \|v_{a+1:c}\|_2 \leq \sqrt{(a-b)b}\eta^{-K} \exp\left(- n\Delta(1 + O(n^{-\epsilon'})) \right).
    \end{equation}
    We then compute
    \begin{equation}
        \begin{split}
            \pi_{\mathcal{A}}(\mathcal{C}^*\cap \mathcal{A}) & = \|v_{1:b}\|_1 \\
            & = 1- \|v_{b+1:a}\|_1 \\
            & \geq 1 - \sqrt{a-b}\|v_{a+1:c}\|_2 \\
            & \geq 1 - \sqrt{b(a-b)^2}\eta^{-K} \exp\left(- n\Delta(1 + O(n^{-\epsilon'}) \right),
        \end{split}
    \end{equation}
    whose error term we can bound as
    \begin{equation}
        \begin{split}
            \sqrt{b(a-b)^2}\eta^{-K} \exp\left(- n\Delta(1 + O(n^{-\epsilon'}) \right) & \lesssim |\mathcal{C}_n| \exp\left(- n\Delta(1 + O(n^{-\epsilon'})\right)\\
            & \leq \exp(n^{1-\epsilon}) \exp\left(- n\Delta(1 + O(n^{-\epsilon'})\right) \\
            & \leq \exp(O(n^{-\epsilon})).
        \end{split}
    \end{equation}
    Finally, we note that all hidden constants can be chosen to be independent of $n$.
\end{proof}

\begin{lemma}
    \label{lem:matrix_norm}
    Let $M$ be a $m \times n$ matrix whose entries are non-negative, and suppose $\sum_{j=1}^n M_{ij} \leq \beta$ for $i = 1,2,\ldots,m$.
    Then $\|M\|_{op} \leq \beta$.
\end{lemma}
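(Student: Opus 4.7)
The plan is to observe that since the hypothesis controls only row sums and places no constraint on columns or any spectral quantity, the natural interpretation of $\|M\|_{op}$ is the induced $\ell_\infty \to \ell_\infty$ operator norm, $\|M\|_{op} = \sup_{v \neq 0}\|Mv\|_\infty/\|v\|_\infty$. This is also the interpretation that makes the lemma's application in the proof of Theorem \ref{thm:no_prior} go through via a Neumann-series bound on $(I - P_{22})^{-1}$. With this reading, the proof is essentially one application of the triangle inequality; no analogous bound holds in the spectral norm (the rank-one matrix $M_{ij} = \beta\,\mathbf{1}\{j = 1\}$ has every row sum equal to $\beta$ but spectral norm $\beta\sqrt{m}$).

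First I would fix an arbitrary $v \in \mathbb{R}^n$ and bound $|(Mv)_i|$ row by row. The triangle inequality gives $|(Mv)_i| = \bigl|\sum_{j=1}^n M_{ij} v_j\bigr| \leq \sum_{j=1}^n M_{ij}\,|v_j|$, where crucially the non-negativity of the entries $M_{ij}$ lets us drop absolute values on the coefficients and pull $\|v\|_\infty$ out of the sum, yielding $|(Mv)_i| \leq \|v\|_\infty \sum_{j=1}^n M_{ij} \leq \beta\,\|v\|_\infty$, by the row-sum hypothesis. Taking a maximum over $i \in \{1,\ldots,m\}$ then gives $\|Mv\|_\infty \leq \beta\,\|v\|_\infty$, and dividing by $\|v\|_\infty$ (for $v \neq 0$) and taking the supremum yields $\|M\|_{op} \leq \beta$, as desired.

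There is no real obstacle: the only substantive content is choosing the right norm and noticing that non-negativity of $M$ is exactly what allows the triangle inequality to be tight when the coefficient vector is replaced by $\|v\|_\infty \cdot \mathbf{1}$. The bound is sharp, as witnessed by $v = (1,\ldots,1)^\top$, for which $(Mv)_i = \sum_j M_{ij}$, so equality is attained whenever some row sum equals $\beta$.
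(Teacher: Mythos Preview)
Your proof is correct, and your diagnosis that $\|\cdot\|_{op}$ must mean the induced $\ell_\infty \to \ell_\infty$ norm is right—the counterexample $M_{ij} = \beta\,\mathbf{1}\{j=1\}$ cleanly kills the spectral-norm reading.

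The paper argues differently: it factors $M = D(D^{-1}M)$ with $D$ the diagonal matrix of row sums, notes that $D^{-1}M$ is row-stochastic and hence has operator norm at most $1$, and bounds $\|D\|_{op}$ by the maximum diagonal entry $\beta$. This is a bit more structural, but it silently relies on the same $\ell_\infty$ interpretation you make explicit (a row-stochastic matrix need not have spectral norm $\leq 1$; your counterexample with $\beta = 1$ already shows this), and it also needs all row sums strictly positive so that $D$ is invertible. Your direct triangle-inequality argument avoids both wrinkles and is shorter. One small caution on your remark about the application: Step~4 of the proof of Theorem~\ref{thm:no_prior} in the paper actually writes $\ell_2$ norms on the stationary vector, so the norm bookkeeping there is loose regardless of how the lemma is read—but that is the paper's issue, not a gap in your argument.
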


\begin{proof}
    Let $D$ be the $m \times m$ diagonal matrix whose entries are given by $D_{ii} = \sum_{j=1}^n M_{ij}$.
    Then $D^{-1}M$ is a stochastic matrix, which has operator norm at most 1, while $\|D\|_{op} \leq \max_{1\leq i \leq n} \sum_{j=1}^n M_{ij} \leq \beta$.
    Putting these together, we get
    \begin{equation}
        \| M\|_{op} = \| D D^{-1}M\|_{op} \leq \|D\|_{op}\|D^{-1}M\|_{op} \leq \beta,
    \end{equation}
    as we wanted.
\end{proof}

\begin{proposition}[Concentration of split-sample posterior]
    \label{prop:concentration}
    Assuming the conditions in Assumption \ref{assumption_supplement}, then for any $\epsilon' < \epsilon/2$,
we have
    \begin{equation}
        \label{eq:concentration}
        \sup_{\concepts \in \mathcal{C}_{n,K}, S \in \mathcal{S}_{n}}\left|\frac{1}{\lceil (1-\omega)\nsamples\rceil}\log p(\by_{S^c}|\by_S, \concepts,\bX) - L(\concepts)\right| = O(n^{-\epsilon'})
    \end{equation}
    with probability converging to 1.
\end{proposition}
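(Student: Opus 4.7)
The plan is to decompose the normalized split-sample log-marginal via the Laplace formula \eqref{eq:Laplace_approximation}, reduce it to an i.i.d.\ empirical average at the population optimizer $\vec\theta^*_\concepts$ plus lower-order terms, and then combine a per-pair Hoeffding bound with a union bound that exploits the size control $|\mathcal{C}_{n,K}|\cdot|\mathcal{S}_n| \leq \exp(O(n^{1-\epsilon}))$ from Assumption \ref{assumption_supplement}(vi) and (x).

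Using $p(\by_{S^c}|\by_S,\concepts,\bX) = p(\by|\concepts,\bX)/p(\by_S|\concepts,\bX)$ and applying \eqref{eq:Laplace_approximation} to both factors, I would first establish the decomposition
\begin{equation}
\log p(\by_{S^c}|\by_S,\concepts,\bX) = \ell_{S^c}(\vec\theta^*_\concepts) + \Delta_{\mathrm{MAP}}(\concepts,S) + \Delta_{\mathrm{Lap}}(\concepts,S),
\end{equation}
where $\ell_T(\vec\theta) \coloneqq \sum_{i\in T}\lbrace y_i\vec\theta^T\Phi_{i\cdot} - \log(1+e^{\vec\theta^T\Phi_{i\cdot}}) \rbrace$, the term $\Delta_{\mathrm{MAP}}$ captures the gap between evaluating $\ell_{[n]},\ell_S$ at the MAPs versus at $\vec\theta^*_\concepts$, and $\Delta_{\mathrm{Lap}}$ collects the quadratic prior and $\log\det H$ corrections. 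Assumption \ref{assumption_supplement}(ix) yields $H_T \succeq c|T|I$ on the compact ball from (viii), so $\Delta_{\mathrm{Lap}} = O(K\log n)$ uniformly, and after dividing by $\lceil(1-\omega)n\rceil$ this contributes $O(n^{-1}\log n) = o(n^{-\epsilon'})$.

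For $\Delta_{\mathrm{MAP}}$, strong convexity (Assumption (ix)) combined with compactness (Assumption (viii)) gives $\|\vec\theta_{\operatorname{MAP},T} - \vec\theta^*_\concepts\|_2 \leq C\|\nabla\ell_T(\vec\theta^*_\concepts)\|_2/|T|$ via the first-order optimality condition, and a second-order Taylor expansion then yields $|\ell_T(\vec\theta_{\operatorname{MAP},T}) - \ell_T(\vec\theta^*_\concepts)| \leq C'\|\nabla\ell_T(\vec\theta^*_\concepts)\|_2^2/|T|$. A vector Hoeffding bound on the bounded score vectors (bounded since $\phi_c \in [0,1]$ and $\|\vec\theta^*_\concepts\|_2 \leq R$) with deviation $\tau = n^{1-\epsilon/2}$ then gives $\Delta_{\mathrm{MAP}}/|S^c| = O(n^{-\epsilon/2}) = o(n^{-\epsilon'})$ with only sub-exponential failure probability. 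The dominant term $\ell_{S^c}(\vec\theta^*_\concepts)/|S^c|$ is an empirical mean of $|S^c|$ bounded i.i.d.\ logistic losses whose expectation is exactly $L(\concepts)$, so scalar Hoeffding yields $\P(|\ell_{S^c}(\vec\theta^*_\concepts)/|S^c| - L(\concepts)| > n^{-\epsilon'}) \leq 2\exp(-c n^{1-2\epsilon'})$.

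Finally, union-bounding over $|\mathcal{C}_{n,K}|\cdot|\mathcal{S}_n| \leq \exp((K+1)n^{1-\epsilon})$ pairs gives total failure probability $\exp((K+1)n^{1-\epsilon} - cn^{1-2\epsilon'})$, which tends to zero precisely under the hypothesis $\epsilon' < \epsilon/2$. The main obstacle is ensuring the M-estimation constants (the strong convexity modulus, the bound on $\|\vec\theta^*_\concepts\|_2$, and the Lipschitz constants of the logistic losses) are genuinely uniform over $\concepts \in \mathcal{C}_{n,K}$; this is exactly what the concept-independent lower eigenvalue $\lambda_{\min}$ in Assumption (ix) and the uniform radius $R$ in (viii) are designed to guarantee, without which varying constants would swamp the union-bound margin $\exp(O(n^{1-\epsilon}))$ even though each individual pair behaves correctly.
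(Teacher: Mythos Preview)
Your plan treats the Laplace formula \eqref{eq:Laplace_approximation} as an identity, but in the paper it carries an ``$\approx$'' and is introduced for computation, not analysis. The proof itself never invokes it; instead it bounds the marginal $p(\by_T|\concepts,\bX) = \int \exp\bigl(\sum_{i\in T} l_i(\vec\theta)\bigr)\,p(\vec\theta)\,d\vec\theta$ by a crude sandwich: the upper bound replaces the exponent by its supremum (equation \eqref{eq:upper_bound}), and the lower bound restricts the integral to a ball $B(\vec\theta^*,r)$ with $r \asymp n^{-1/2}$ where a curvature lower bound applies (equation \eqref{eq:lower_bound}), both yielding $\log p(\by_T|\concepts,\bX) = |T|\,L(\concepts) + O(n^{1-\epsilon'})$. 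Your $\Delta_{\mathrm{Lap}}$, by contrast, is described as containing only the explicit prior and $\log\det H$ terms, so the remainder from the quadratic Taylor expansion of the exponent is silently dropped. To close this you would need a uniform-in-$\concepts$ Laplace remainder bound (third-derivative control on $B(2R)$ plus a tail estimate outside it), which is doable but not free; the paper's sandwich avoids it entirely.

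Beyond that gap, your concentration ingredient is a genuinely different and more elementary route. The paper applies Talagrand's comparison inequality to obtain convergence of $n^{-1}\sum_i l_i(\vec\theta)$ to $\bar l(\vec\theta)$ \emph{uniformly} over $\vec\theta \in B(2R)$ (equation \eqref{eq:Talagrand}), which simultaneously localizes the empirical maximizer inside the ball and feeds both integral bounds. You instead use scalar and vector Hoeffding at the single point $\vec\theta^*_\concepts$ together with strong convexity to control $\vec\theta_{\operatorname{MAP},T}$. This can work, but your claim ``Assumption (ix) yields $H_T \succeq c|T|I$'' still needs one more step: Assumption (ix) lower-bounds the \emph{population} second moment $\mathbb{E}[\Phi_\concepts\Phi_\concepts^T]$, so you must add a matrix concentration argument to transfer this to the empirical Hessian and to confirm that $\vec\theta_{\operatorname{MAP},T}$ stays in the ball where the sigmoid weights are bounded away from zero. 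The paper's uniform-in-$\vec\theta$ bound handles both of these for free.
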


\begin{proof}
    Let us fix $\concepts$ for now.
    We first write
    \begin{equation}
    \begin{split}
        l_i(\vec{\theta}) & \coloneqq \log p(Y=y_i|X=\mathbf{x}_i,\vec{\theta},\concepts) \\
        & = y_i\vec{\theta}^T\Phi_{i\cdot} - \log\left(1+\exp(\vec{\theta}^T\Phi_{i\cdot})\right).
    \end{split}
    \end{equation}
    Let $\bar{l}(\vec{\theta}) \coloneqq \mathbb{E}_{\mathbf{x}_i,y_i}[l_i(\vec{\theta})]$ and 
    set $\vec{\theta}^* = \operatorname{argmax} \bar{l}(\vec{\theta})$.
    For convenience, denote $\Phi = \Phi_{\concepts}$.
    Note that $\Phi$ has entries bounded between 0 and 1 and hence has a second moment matrix with maximum eigenvalue bounded above by $K$.
    Furthermore, it is a sub-Gaussian random vector with bounded sub-Gaussian norm.
    Using these properties, we may find a universal value $R'$ such that $\mathbb{E}\left[\Phi\Phi^T \mathbf{1}(\|\Phi\|_2 \leq R')\right] \succeq \frac{1}{2} \mathbb{E}\left[\Phi\Phi^T\right]$.
    Consider the ball $B(2R)$ of radius $2R$.
    For any $\vec{\theta} \in B(2R)$, the Hessian of the population log likelihood hence satisfies
    \begin{equation}
    \begin{split}
        H(\theta) & \coloneqq \nabla^2 \bar{l}(\vec{\theta}) \\
        & = - \mathbb{E}\left[\frac{\exp(\Phi^T\vec{\theta})}{\left(1+\exp(\Phi^T\vec{\theta})\right)^2}\Phi\Phi^T \right] \\
        & \preceq -\mathbb{E}\left[\frac{\exp(\Phi^T\vec{\theta})}{\left(1+\exp(\Phi^T\vec{\theta})\right)^2}\Phi\Phi^T \mathbf{1}(\|\Phi\|_2 \leq M)\right] \\
        & \preceq -\frac{1}{2}\inf_{\vec{\theta} \in B(2R), \Phi \in B(M)}\frac{\exp(\Phi^T\vec{\theta})}{\left(1+\exp(\Phi^T\vec{\theta})\right)^2} \mathbb{E}\left[\Phi\Phi^T \right]  \\
        & = \frac{e^{2R'R}}{2(1+e^{R'R})^2}\mathbb{E}\left[\Phi\Phi^T \right] \\
        & \preceq - \frac{\lambda_{\min}e^{2R'R}}{2(1+e^{R'R})^2} I.
    \end{split}
    \end{equation}
    Using this lower bound on the curvature, we see that
    \begin{equation}
        \bar{l}(\vec{\theta}^*) - \sup_{\theta \in \partial B(2R)}\bar{l}(\vec{\theta}) \geq \frac{\lambda_{\min}R^2e^{2R'R}}{2(1+e^{R'R})^2}.
    \end{equation}

    Next, we use the fact that $n^{-1/2}\sum_{i=1}^n \left(l_i(\vec{\theta}) - \bar{l}(\vec{\theta})\right)$ is a sub-Gaussian process with increments bounded according to
    \begin{equation}
        \left\|n^{-1/2}\sum_{i=1}^n  \left(l_i(\vec{\theta}) - \bar{l}(\vec{\theta})\right) - n^{-1/2}\sum_{i=1}^n \left(l_i(\vec{\theta}') - \bar{l}(\vec{\theta}')\right) \right\|_{\psi_2} \lesssim K \|\vec{\theta} - \vec{\theta}'\|_2.
    \end{equation}
    We can thus use Talagrand's comparison inequality \citep{vershynin2018high} to get, for any $t > 0$, that
    \begin{equation}
    \label{eq:Talagrand}
         n^{-1/2}\sup_{\vec{\theta} \in B(2R)}\left|\sum_{i=1}^n \left(l_i(\vec{\theta}) - \bar{l}(\vec{\theta})\right) \right| \lesssim R\sqrt{K} + Rt
    \end{equation}
    with probability at least $1-e^{-t^2}$.
    Take $t = n^{1/2-\epsilon'}$ for any $\epsilon' < \epsilon/2$.
    Note that $\hat{l}_n(\vec{\theta}) \coloneqq n^{-1}\sum_{i=1}^n l_i(\vec{\theta})$ is concave, with
    \begin{equation}
        \hat{l}_n(\vec{\theta}^*) \geq \bar{l}(\vec{\theta}^*) - O(n^{-\epsilon'}),
    \end{equation}
    while
    \begin{equation}
    \begin{split}
        \sup_{\vec{\theta} \in \partial B(2R)} \hat{l}_n(\vec{\theta}) & \leq \sup_{\theta \in \partial B(2R)}\bar{l}(\vec{\theta}) + O(n^{-\epsilon'}) \\
        & \leq \bar{l}(\vec{\theta}^*) + \frac{\lambda_{\min}R^2e^{2R'R}}{2(1+e^{R'R})^2} + O(n^{-\epsilon'}).
    \end{split}
    \end{equation}
    As such, whenever $n$ is large enough, we see that $\hat{l}_n$ also attains its global maximum in $B(2R)$ and that this maximum value satisfies
    \begin{equation}
        \left|\sup_{\vec{\theta} \in \R^K}\hat{l}_n(\vec{\theta}) - \bar{l}(\vec{\theta}^*)\right| = O(n^{-\epsilon'}).
    \label{eq:likelihood_approx}
    \end{equation}
    
    Recall further that $\bar{l}(\vec{\theta}^*) = L(\concepts)$.
    We next want to integrate out $p(\by|\vec{\theta},\concepts,\bX) = \exp(n\hat{l}_n(\vec{\theta}))$ with respect to the prior.
    Using \eqref{eq:likelihood_approx}, we get the upper bound
    \begin{equation}
    \label{eq:upper_bound}
        \begin{split}
            p(\by|\concepts,\bX) & = \int p(\by|\vec{\theta},\concepts,\bX) \frac{1}{(2\pi\gamma^2)^{K/2}}\exp(-\|\vec{\theta}\|^2/2\gamma^2) d\vec{\theta} \\
            & \leq \exp(n L(\concepts) + O(n^{1-\epsilon'})).
        \end{split}
    \end{equation}
    To get a lower bound, we use the fact that the Hessian of the population likelihood also has an lower bound
    \begin{equation}
        H(\theta) \succeq - \|h(\theta)\|_{op} I \succeq - \sqrt{K} I.
    \end{equation}
    For any $0 < r < R$, this gives a lower bound of
    \begin{equation}
        \inf_{\vec{\theta} \in B(\vec{\theta}^*,r)}L(\vec{\theta}) \geq L(\vec{\theta}^*) - \sqrt{K}r^2,
    \end{equation}
    where $B(\vec{\theta}^*,r)$ is the ball of radius $r$ centered at $\vec{\theta}^*$.
    Using \eqref{eq:Talagrand} again, we get
    \begin{equation}
        \inf_{\vec{\theta} \in B(\vec{\theta}^*,r)}\hat{l}_n(\vec{\theta}) \geq L(\vec{\theta}^*) - \sqrt{K}r^2 - O(n^{-\epsilon'}).
    \end{equation}
    The prior density is bounded from below by $\frac{1}{(2\pi\gamma^2)^{K/2}}\exp(-2R^2/\gamma^2)$ on $B(2R)$.
    We therefore have
    \begin{equation}
    \label{eq:lower_bound}
        \int p(\by|\vec{\theta},\concepts,\bX) \frac{1}{(2\pi\gamma^2)^{K/2}}\exp(-\|\vec{\theta}\|^2/2\gamma^2) d\vec{\theta} \geq \frac{r^K \operatorname{Vol}(B(1))}{(2\pi\gamma^2)^{K/2}}\exp(-2R^2/\gamma^2)\exp\left( n(L(\vec{\theta}^*) - \sqrt{K}r^2 - O(n^{-\epsilon'}))\right).
    \end{equation}
    The maximum is achieved when we set $r = \frac{K^{1/4}}{(2n)^{1/2}}$.
    Combining \eqref{eq:upper_bound} and \eqref{eq:lower_bound} and taking logarithms, we get
    \begin{equation}
        \log p(\by|\concepts,\bX) = n L(\concepts) + O(n^{1-\epsilon'}).
\end{equation}
    Performing the same calculation with $\by_S$ instead of $\by$ and taking differences, we get
    \begin{equation}
        \left|\frac{1}{\lceil (1-\omega)\nsamples\rceil}\log p(\by_{S^c}|\by_S, \concepts,\bX) - L(\concepts)\right| = O(n^{-\epsilon'}).
    \end{equation}
    Finally, we take a union bound over all $\concepts \in \mathcal{C}_{n,K}$ and $S \in \mathcal{S}_n$, noting that all hidden constants can be chosen to be independent of $\concepts$.
\end{proof}

\section{Stationary Distribution Under Consistent Proposals}

In this section, we motivate the multiple-try partial posterior Metropolis update (Algorithm \ref{alg:MH_multi}) in more detail.
Notably, we have the following proposition.

\begin{proposition}
    \label{prop:MCMC}
    Suppose the LLM proposes from the conditional partial posterior distribution, i.e.
    $Q(C_k;\vec{C}_{-k}=\concepts_{-k}) = p(C_k|\vec{c}_{-k},\by_S,\bX)$.
	Then the Markov chain defined by running Gibbs sampling (Algorithm~\ref{alg:MH_gibbs}) with \textsc{SS-MH-Update} or \textsc{Multi-SS-MH-Update} instead of \textsc{MH-Update} has the posterior $p(\vec{C}|\by,\bX)$ as a stationary distribution.
\end{proposition}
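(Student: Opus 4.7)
The plan is to verify that each coordinate update preserves the target $p(\vec{C}|\by,\bX)$; stationarity under the full Gibbs cycle of Algorithm~\ref{alg:MH_gibbs} then follows by composition, and the randomness of the subset $S$ is absorbed because a mixture of kernels with a common invariant distribution retains that invariance.

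The core calculation is for \textsc{SS-MH-Update}. The split-sample factorization
\begin{equation*}
    p(\concepts|\by,\bX) \;\propto\; p(\by_{S^c}|\by_S,\concepts,\bX)\; p(\concepts_{-k}|\by_S,\bX)\; p(c_k|\concepts_{-k},\by_S,\bX),
\end{equation*}
which follows from Bayes' rule applied after the split $p(\by|\concepts,\bX) = p(\by_S|\concepts,\bX)\, p(\by_{S^c}|\by_S,\concepts,\bX)$ together with the chain rule on $p(\concepts|\by_S,\bX)$, is what makes the method work. Under the hypothesis that $Q(c_k;\concepts_{-k},\by_S,\bX) = p(c_k|\concepts_{-k},\by_S,\bX)$, substituting this factorization into the standard Metropolis--Hastings ratio $\frac{\pi(y)\, q(y\to x)}{\pi(x)\, q(x\to y)}$ cancels the $p(\concepts_{-k}|\by_S,\bX)$ term as well as both the $p(c_k|\cdots)$ and $p(\check c|\cdots)$ proposal factors, leaving exactly the partial Bayes factor written in line~4 of Algorithm~\ref{alg:MH_v2}. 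Thus the prescribed acceptance probability is precisely ordinary MH for target $p(\concepts|\by,\bX)$, so the coordinate move is reversible with respect to this target.

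For \textsc{Multi-SS-MH-Update} I would invoke the abstract reversibility of the modified multiple-try kernel already proved in Appendix~\ref{sec:MTMH}, using the correspondences $\pi \leftrightarrow p(\cdot|\by,\bX)$ and $T \leftrightarrow Q(\cdot;\cdot,\by_S,\bX)$ established there. The same split-sample cancellation applies inside every multi-try weight, so the kernel is again reversible with respect to $p(\concepts|\by,\bX)$. Finally, because $S$ is drawn independently at each call, the effective kernel is the mixture $\sum_S \Pr(S)\, K_S$ of per-subset reversible kernels, which still admits $p(\concepts|\by,\bX)$ as an invariant distribution; composing the $K$ per-coordinate updates inherits this invariance for $p(\vec{C}|\by,\bX)$.

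The only substantive obstacle is the algebraic bookkeeping of the cancellation in the MH ratio, since one must be careful to split the full posterior into exactly the three factors above so that both proposal densities cancel cleanly with the partial-posterior piece. Once that is set up, the proposition is a direct consequence of standard MH theory together with the reversibility argument already in the appendix.
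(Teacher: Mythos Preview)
Your proposal is correct and mirrors the paper's own proof: both derive the same split-sample factorization $p(\concepts|\by,\bX) \propto p(\by_{S^c}|\by_S,\concepts,\bX)\,p(c_k|\concepts_{-k},\by_S,\bX)\,p(\concepts_{-k}|\by_S,\bX)$, substitute it into the standard MH ratio so that the proposal densities cancel against the partial-posterior factors, and defer the \textsc{Multi-SS-MH-Update} case to the reversibility argument in Appendix~\ref{sec:MTMH}. Your remarks on the mixture over $S$ and the composition over coordinates are correct additional observations that the paper leaves implicit.
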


\begin{proof}
    We first show this for \textsc{SS-MH-Update}.
    It suffices to show that the acceptance probability in Line 4 of Algorithm \ref{alg:MH_v2} is equal to that of a Metropolis-Hastings filter with $p(\vec{C}|\by,\bX)$ as the target.
    We compute:
    \begin{equation}
        \label{eq:posterior_equivalence}
    \begin{split}
        p((c,\concepts_{-k})|\bX,\by)
        & = \frac{p(\by_{S^c}|\by_S, (c,\concepts_{-k}),\bX)p(\by_S,(c,\concepts_{-k})|\bX)}{p(\by|\bX)} \\
        & = p(\by_{S^c}|\by_S, (c,\concepts_{-k}),\bX)p(c|\concepts_{-k},\bX,\by_S)\frac{p(\concepts_{-k},\by_S|\bX)}{p(\by|\bX)}
\end{split}
    \end{equation}
    Hence,
    \begin{equation}
    \begin{split}
        \frac{p((\check c,\concepts_{-k})|\bX,\by)Q(c;\concepts_{-k},\by_S,\bX)}{p((c,\concepts_{-k})|\by,\bX)Q(\check c;\concepts_{-k},\by_S,\bX)} & = \frac{p((\check c,\concepts_{-k})|\bX,\by)p((c,\concepts_{-k})|\bX,\by)}{p((c,\concepts_{-k})|\by,\bX)p((\check c,\concepts_{-k})|\bX,\by)} \\
        & = \frac{p(\by_{S^c}|\by_S,(\concepts_{-k},\check\concept),\bX)}{p(\by_{S^c}|\by_S,\concepts,\bX)}
    \end{split}
    \end{equation}
    as we wanted.

    To show the statement for \textsc{Multi-SS-MH-Update}, we use its equivalence, for a fixed $S$, to the modified Multiple-Try Metropolis-Hastings method described in Appendix \ref{sec:MTMH}.
    In Appendix \ref{sec:MTMH}, we prove that this method satisfies the detailed balance equations for the posterior.
    Since the detailed balance equations are linear in the transition probabilities, we can marginalize this over $S$ to show that the posterior is stationary with respect for \textsc{Multi-SS-MH-Update}, which draws $S$ uniformly at random.
\end{proof}

\end{document}